\def\eqref#1{equation~\ref{#1}}
\def\1{\bm{1}}
\DeclareMathAlphabet{\mathsfit}{\encodingdefault}{\sfdefault}{m}{sl}
\SetMathAlphabet{\mathsfit}{bold}{\encodingdefault}{\sfdefault}{bx}{n}
\algrenewcommand\algorithmicindent{1.0em}%
\theoremstyle{definition}
\newtheorem{definition}{Definition}[section]
\theoremstyle{assumption}
\newtheorem{assumption}{Assumption}[section]
\tikzset{ 
table/.style={
  matrix of math nodes,
  nodes={rectangle,align=center},
  nodes in empty cells,
  ampersand replacement=\&,
  left delimiter=[,
  right delimiter={]},
}
}
\newcommand{\expt}{\mathop{\mathbb{E}}}
\newcommand{\bR}{\mathbb{R}}
\newcommand{\cF}{\mathcal{F}}
\newcommand{\cS}{\mathcal{S}}
\newcommand{\cA}{\mathcal{A}}
\newcommand{\cO}{\mathcal{O}}
\newcommand{\cP}{\mathcal{P}}
\newcommand{\cR}{\mathcal{R}}
\newcommand{\cX}{\mathcal{X}}
\newcommand{\cU}{\mathcal{U}}
\newcommand{\cD}{\mathcal{D}}
\newcommand{\cL}{\mathcal{L}}
\newcommand{\fD}{\mathfrak{D}}
\newcommand{\fB}{\mathfrak{B}}
\newcommand{\np}{\Pi^\Sigma_\theta}
\newcommand{\npstar}{\Pi^{\Sigma^*}_{\theta^*}}
\newcommand{\fpsron}{\cF_{\textsc{PSRO-N}}}
\newcommand{\neupl}{NeuPL\xspace}
\newcommand{\nash}{Nash\xspace}
\newcommand{\rps}{{\it rock-paper-scissors}\xspace}
\newcommand{\rws}{{\it running-with-scissors}\xspace}
\newcommand{\psro}{{\sc PSRO}\xspace}
\newcommand{\psronash}{{\sc PSRO-Nash}\xspace}
\newcommand{\eqdef}{\mathrel{\mathop:}=}
\newcommand\Tau{\mathcal{T}}
\newtheorem{theorem}{Theorem}
\title{\neupl: Neural Population Learning}
\author{Siqi Liu \\
University College London \\
DeepMind \\
\texttt{liusiqi@google.com} \\
\And
Luke Marris \\
University College London \\
DeepMind \\
\texttt{marris@google.com} \\
\And
Daniel Hennes \\
DeepMind \\
\texttt{hennes@google.com} \\
\And
Josh Merel\thanks{Currently at Reality Labs, work carried out while at DeepMind.} \\
DeepMind \\
\texttt{jsmerel@gmail.com} \\
\And
Nicolas Heess \\
DeepMind \\
\texttt{heess@google.com} \\
\And
Thore Graepel\thanks{Work carried out while at DeepMind.} \\
University College London \\
\texttt{t.graepel@ucl.ac.uk} \\
}
\begin{document}

\maketitle

\begin{abstract}
Learning in strategy games (e.g. StarCraft, poker) requires the discovery of diverse policies. This is often achieved by iteratively training new policies against existing ones, growing a policy population that is robust to exploit. 
This iterative approach suffers from two issues in {\it real-world} games: a) under finite budget, approximate best-response operators at each iteration needs truncating, resulting in under-trained {good-}responses populating the population;
b) repeated learning of basic skills at each iteration is wasteful and becomes intractable in the presence of increasingly strong opponents.
In this work, we propose Neural Population Learning (\neupl) as a solution to both issues. \neupl offers convergence guarantees to a population of {\it best}-responses under mild assumptions. By representing a population of policies within a single conditional model, \neupl enables transfer learning across policies. 
Empirically, we show the generality, improved performance and efficiency of \neupl across several test domains\footnote{See \url{https://neupl.github.io/demo/} for supplementary illustrations.}. Most interestingly, we show that novel strategies become more accessible, not less, as the neural population expands.
\end{abstract}

The need for learning not one, but a population of strategies is rooted in classical game theory. Consider the purely cyclical game of \rps, the performance of individual strategies is meaningless as improving against one entails losing to another. By contrast, performance can be meaningfully examined {\em between} populations. A population consisting of pure strategies $\{\textit{rock}, \textit{paper}\}$ does well against a singleton population of $\{\textit{scissors}\}$ because in the meta-game where both populations are revealed, a player picking strategies from the former can always beat a player choosing from the latter\footnote{This is formally quantified by Relative Population Performance, see Definition~\ref{def:rpp} \citep{balduzzi2019openended}.}. This observation underpins the unifying population learning framework of Policy Space Response Oracle (PSRO) where a new policy is trained to best-respond to a mixture over previous policies at each iteration, following a meta-strategy solver \citep{lanctot2017unified}. Most impressively, \cite{vinyals2019grandmaster} explored the strategy game of StarCraft with a league of policies, using a practical variation of \psro. The league counted close to a thousand sophisticated deep RL agents as the population collectively became robust to exploits.

Unfortunately, such empirical successes often come at considerable costs. Population learning algorithms with theoretical guarantees are traditionally studied in normal-form games \citep{brown1951, mcmahan_planning_2003} where best-responses can be solved exactly. This is in stark contrast to real-world {\it Game-of-Skills} \citep{czarnecki2020real} --- such games are often temporal in nature, where best-responses can only be approximated with computationally intensive methods (e.g. deep RL). This has two implications. First, for a given opponent, one cannot efficiently tell apart {\it good}-responses that temporarily plateaued at local optima from globally optimal best-responses. As a result, approximate best-response operators are often truncated prematurely, according to hand-crafted schedules \citep{lanctot2017unified, mcaleer_pipeline_2021}. %
Second, real-world games often afford strategy-agnostic transitive skills that are pre-requisite to strategic reasoning. 
Learning such skills from scratch at each iteration in the presence of evermore skillful opponents quickly becomes intractable beyond a few iterations.

This iterative and isolated approach is fundamentally at odds with human learning. For humans, mastering diverse strategies often facilitates incremental strategic innovation and learning about new strategies does not stop us from revisiting and improving upon known ones \citep{caruana1997multitask, krakauer}. In this work, we make progress towards endowing artificial agents with similar capability by extending population learning to real-world games.
Specifically, we propose \neupl, an efficient and general framework that learns and represents diverse policies in symmetric zero-sum games within a single conditional network, using the computational infrastructure of simple self-play ({\bf Section~\ref{sec:neupl}}). Theoretically, we show that \neupl converges to a sequence of iterative best-responses under certain conditions ({\bf Section~\ref{sec:convergence_proof}}).
Empirically, we illustrate the generality of \neupl by replicating known results of population learning algorithms on the classical domain of \rps as well as its partially-observed, spatiotemporal counterpart \rws \citep{vezhnevets2020options} ({\bf Section~\ref{sec:generality}}).
Most interestingly, we show that \neupl enables transfer learning across policies, discovering exploiters to strong opponents that would have been inaccessible to comparable baselines ({\bf Section~\ref{sec:positive_transfer}}). Finally, we show the appeal of \neupl in the challenge domain of MuJoCo Football \citep{liu2019emergent} where players must continuously refine their movement skills in order to coordinate as a team. In this highly transitive game, \neupl naturally represents a short sequence of best-responses without the need for a carefully chosen truncation criteria ({\bf Section~\ref{sec:scale_to_soccer}}).
    
\section{Methods}

Our method is designed with two desiderata in mind. First, at convergence, the resulting population of policies should represent a sequence of iterative best-responses under reasonable conditions. Second, transfer learning can occur across policies throughout training.
In this section, we define the problem setting of interests as well as necessary terminologies. We then describe \neupl, our main conceptual algorithm as well as its theoretical properties. To make it concrete, we further consider {\it deep RL} specifically and offer two practical implementations of \neupl for real-world games.

\subsection{Preliminaries}
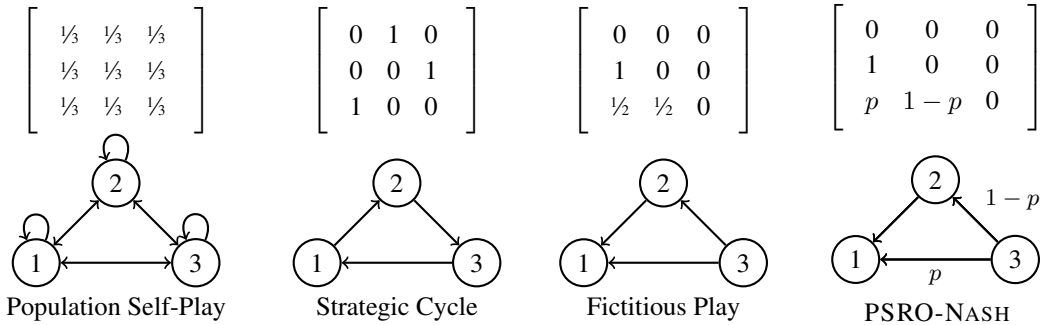
\begin{figure}
\centering
\begin{tikzpicture}[node distance={15mm}, thick, main/.style = {draw, circle}] 
\node[main] (1) {1}; 
\node[main] (2) [above right of=1] {2}; 
\node[main] (3) [below right of=2] {3}; 
\draw[<-] (1) to [out=112.5,in=67.5,looseness=5.0](1);
\draw[<-] (2) to [out=112.5,in=67.5,looseness=5.0](2);
\draw[<-] (3) to [out=112.5,in=67.5,looseness=5.0](3);
\draw[<->] (1) to (2);
\draw[<->] (2) to (3);
\draw[<->] (3) to (1);
\matrix (m) [table] [above of=2] {
    \sfrac{1}{3} \& \sfrac{1}{3} \& \sfrac{1}{3} \\
    \sfrac{1}{3} \& \sfrac{1}{3} \& \sfrac{1}{3} \\
    \sfrac{1}{3} \& \sfrac{1}{3} \& \sfrac{1}{3} \\
}; \vspace{2mm}
\node[below] at (current bounding box.south) {Population Self-Play};
\end{tikzpicture} \hfill
\begin{tikzpicture}[node distance={15mm}, thick, main/.style = {draw, circle}] 
\node[main] (1) {1}; 
\node[main] (2) [above right of=1] {2}; 
\node[main] (3) [below right of=2] {3}; 
\draw[->] (1) -- (2);
\draw[->] (2) -- (3);
\draw[->] (3) -- (1);
\matrix (m) [table] [above of=2] {
    0 \& 1 \& 0 \\
    0 \& 0 \& 1 \\
    1 \& 0 \& 0 \\
};
\node[below] at (current bounding box.south) {Strategic Cycle};
\end{tikzpicture} \hfill
\begin{tikzpicture}[node distance={15mm}, thick, main/.style = {draw, circle}] 
\node[main] (1) {1}; 
\node[main] (2) [above right of=1] {2}; 
\node[main] (3) [below right of=2] {3}; 
\draw[->] (2) to (1);
\draw[->] (3) to (1);
\draw[->] (3) to (2);
\matrix (m) [table] [above of=2] {
    0 \& 0 \& 0 \\
    1 \& 0 \& 0 \\
    \sfrac{1}{2} \& \sfrac{1}{2} \& 0 \\
}; \vspace{2mm}
\node[below] at (current bounding box.south) {Fictitious Play};
\end{tikzpicture} \hfill
\begin{tikzpicture}[node distance={15mm}, thick, main/.style = {draw, circle}] 
\node[main] (1) {1}; 
\node[main] (2) [above right of=1] {2}; 
\node[main] (3) [below right of=2] {3}; 
\path[->]
    (2) edge node {} (1)
    (3) edge node [above right] {\small $1-p$} (2)
    (3) edge node [below] {\small $p$} (1);
\draw[->] (2) to (1);
\draw[->] (3) to (1);
\draw[->] (3) to (2);
\matrix (m) [table] [above of=2] {
    0 \& 0 \& 0 \\
    1 \& 0 \& 0 \\
    $p$ \& $1 - p$ \& 0 \\
};
\node[below] at (current bounding box.south) {\psronash};
\end{tikzpicture} 
\caption{Popular population learning algorithms implemented as directed interaction graphs ({\bf Bottom}), or equivalently, a set of meta-game mixture strategies $\Sigma \in \bR^{3 \times 3} \eqdef \{\sigma_i\}^3_{i=1}$ ({\bf Top}). A directed edge from $i$ to $j$ with weight $\sigma_{ij}$ indicates that policy $i$ optimizes against $j$, with probability $\sigma_{ij}$. Unless labeled, out edges from each node are weighted equally and their weights sum up to one.}
\label{fig:interaction_graph_examples}
\end{figure}

\paragraph{Approximate Best-Response (ABR) in Stochastic Games} 
We consider a symmetric zero-sum Stochastic Game \citep{shapley_stochastic_1953} defined by $(\cS, \cO, \cX, \cA, \cP, \cR, p_0)$ with $\cS$ the state space, $\cO$ the observation space and $\cX: \cS \rightarrow \cO \times \cO$ the observation function defining the (partial) views of the state for both players. Given joint actions $(a_t, a'_t) \in \cA \times \cA$, the state follows the transition distribution $\cP: \cS \times \cA \times \cA \rightarrow \Pr(\cS)$. The reward function $\cR: \cS \rightarrow \bR \times \bR$ defines the rewards for both players in state $s_t$, denoted $\cR(s_t) = (r_t, -r_t)$. The initial state of the environment follows the distribution $p_0$. 
In a given state $s_t$, players act according to policies $(\pi(\cdot|o_{\le t}), \pi'(\cdot|o'_{\le t}))$. Player $\pi$ achieves an expected return of $J(\pi, \pi') = \expt_{\pi, \pi'}[\sum_t r_t]$ against $\pi'$. Policy $\pi^{*}$ is a {\it best response} to $\pi'$ if $\forall \pi, J(\pi^{*}, \pi') \ge J(\pi, \pi')$. 
We define $\hat{\pi} \gets \textsc{ABR}(\pi, \pi')$ with $J(\hat{\pi}, \pi') \ge J(\pi, \pi')$. In other words, an ABR operator yields a policy $\hat{\pi}$ that does no worse than $\pi$, in the presence of an opponent $\pi'$.

\paragraph{Meta-game Strategies in Population Learning}
Given a symmetric zero-sum game and a set of $N$ policies $\Pi \eqdef \{\pi_i\}^N_{i=1}$, we define a normal-form meta-game where players' $i$-th action corresponds to executing policy $\pi_i$ for one episode. A meta-game strategy $\sigma$ thus defines a probability assignment, or an action profile, over $\Pi$. Within $\Pi$, we define $\cU \in \bR^{N \times N} \gets \textsc{EVAL}(\Pi)$ to be the expected payoffs between pure strategies of this meta-game or equivalently, $\cU_{ij} \eqdef J(\pi_i, \pi_j)$ in the underlying game. We further extend the {\sc ABR} operator of the underlying game to mixture policies represented by $\sigma$, such that $\hat{\pi} \gets \textsc{ABR}(\pi, \sigma, \Pi)$ with $\expt_{\pi' \sim P(\sigma)} [J(\hat{\pi}, \pi')] \ge \expt_{\pi' \sim P(\sigma)}[J(\pi, \pi')]$. Finally, we define $f: \bR^{|\Pi| \times |\Pi|} \to \bR^{|\Pi|}$ to be a meta-strategy solver (MSS) with $\sigma \gets f(\cU)$ and $\cF: \bR^{N \times N} \to \bR^{N \times N}$ a meta-graph solver (MGS) with $\Sigma \gets \cF(\cU)$. The former formulation is designed for iterative optimization of approximate best-responses as in \citet{lanctot2017unified} whereas the latter is motivated by concurrent optimization over a set of population-level objectives as in \citet{marta}. In particular, $\Sigma \in \bR^{N \times N} \eqdef \{\sigma_i\}^N_{i=1}$ defines $N$ population-level objectives, with $\pi_i$ optimized against the mixture policy represented by $\sigma_i$ and $\Pi$. As such, $\Sigma \in \bR^{N \times N}$ corresponds to the adjacency matrix of an interaction graph.
Figure~\ref{fig:interaction_graph_examples} illustrates several commonly used population learning algorithms defined by $\Sigma$ or equivalently, their interaction graphs. 

\subsection{Neural Population Learning}

\label{sec:neupl}

We now present \neupl and contrast it with Policy-Space Response Oracles (\psro, \citet{lanctot2017unified}) which similarly focuses on population learning with approximate best-responses by RL. 

\begin{minipage}{0.51\textwidth}
\begin{algorithm}[H]
\begin{small}
    \centering
    \caption{Neural Population Learning (Ours)}\label{alg:neupl}
    \begin{algorithmic}[1]
    \State $\Pi_\theta(\cdot|s,\sigma)$ \Comment{Conditional neural population net.}
    \State $\Sigma \eqdef \{\sigma_i\}^N_{i=1}$ \Comment{Initial interaction graph.}
    \State $\cF: \bR^{N \times N} \to \bR^{N \times N}$ \Comment{Meta-graph solver.}
    \While {true}
        \State $\np \gets \{\Pi_{\theta}(\cdot|s,\sigma_i)\}^N_{i=1}$ \Comment{Neural population.}
        \For{$\sigma_i \in \textsc{Unique}(\Sigma)$}
            \State $\Pi^{\sigma_i}_\theta \gets \Pi_\theta(\cdot|s, \sigma_i)$ 
            \State $\Pi^{\sigma_i}_\theta \gets \textsc{ABR}(\Pi^{\sigma_i}_\theta, \sigma_i, \Pi^\Sigma_\theta)$ \Comment{Self-play.}
        \EndFor
        \State $\cU \gets \textsc{Eval}(\np)$ \Comment{(Optional) if $\cF$ adaptive.}
        \State $\Sigma \gets \cF(\cU)$ \Comment{(Optional) if $\cF$ adaptive.}
    \EndWhile
    \State \Return $\Pi_\theta$, $\Sigma$
    \end{algorithmic}
\end{small}
\end{algorithm}
\end{minipage}
\hfill
\begin{minipage}{0.46\textwidth}
\begin{algorithm}[H]
\begin{small}
    \centering
    \caption{\psro \citep{lanctot2017unified}}\label{alg:psro}
    \begin{algorithmic}[1]
    \State $\Pi \eqdef \{\pi_0\}$ \Comment{Initial policy population.}
    \State $\sigma \gets \textsc{Unif}(\Pi)$ \Comment{Initial meta-game strategy.}
    \State $f: \bR^{|\Pi| \times |\Pi|} \to \bR^{|\Pi|}$ \Comment{Meta-strategy solver.}
    \State $~$
    \For{$i \in [[N]]$} \Comment{N-step ABR.}
        \State Initialize $\pi_{\theta_i}$.
        \State $\pi_{\theta_i} \gets \textsc{ABR}(\pi_{\theta_i}, \sigma, \Pi)$
        \State $\Pi \gets \Pi \cup \{ \pi_{\theta_i} \}$
        \State $\cU \gets \textsc{Eval}(\Pi)$ \Comment{Empirical payoffs.}
        \State $\sigma \gets f(\cU)$
    \EndFor
    \State \Return $\Pi$
    \end{algorithmic}
\end{small}
\end{algorithm}
\end{minipage}

\neupl deviates from \psro in two important ways. First, \neupl suggests concurrent and continued training of all {\it unique} policies such that no good-response features in the population prematurely due to early truncation. Second, \neupl represents an entire population of policies via a shared conditional network $\Pi_\theta(\cdot|s,\sigma)$ with each policy $\Pi_\theta(\cdot|s,\sigma_i)$ conditioned on and optimised against a meta-game mixture strategy $\sigma_i$, enabling transfer learning across policies.
This representation also makes \neupl general: it delegates the choice of {\it effective} population sizes $|\textsc{Unique}(\Sigma)| \le |\Sigma| = N$ to the meta-graph solver $\cF$ as $\sigma_i = \sigma_j$ implies $\Pi_\theta(\cdot|s,\sigma_i) \equiv \Pi_\theta(\cdot|s,\sigma_j)$ (cf. Section~\ref{sec:generality}). 
Finally, \neupl allows for cyclic interaction graphs, beyond the scope of \psro. We discuss the generality of \neupl in the context of prior works in further details in Appendix~\ref{app:more_generality}.

\paragraph{N-step Best-Responses via Lower-Triangular Graphs} A popular class of population learning algorithms seeks to converge to a sequence of $N$ iterative best-responses where each policy $\pi_i$ is a best-response to an opponent meta-game strategy $\sigma_i$ with support over a subset of the policy population $\Pi_{<i} = \{ \pi_j \}_{j<i}$. In \neupl, this class of algorithms are implemented with meta-graph solvers that return {\it lower-triangular} adjacency matrices $\Sigma$ with $\Sigma_{i \le j} = 0$. Under this constraint, $\sigma_0$ becomes a zero vector, implying that $\Pi_\theta(\cdot|s,\sigma_0)$ does not seek to best-respond to any policies. Similar to the role of initial policies $\{\pi_0\}$ in \psro (Algorithm~\ref{alg:psro}), $\Pi_\theta(\cdot|s,\sigma_0)$ serves as a starting point for the sequence of N-step best-responses and any fixed policy can be used. We note that this property further allows for incorporating pre-trained policies in \neupl, as we discuss in Appendix~{\ref{app:prior_knowledge}}.

\begin{algorithm}
  \begin{small}
  \caption{A meta-graph solver implementing \psronash.}\label{alg:f_psro_nash}
  \begin{algorithmic}[1]
    \Function{$\fpsron$}{$\cU$}  \Comment{$\cU \in \bR^{N \times N}$ the empirical payoff matrix.}
    \State Initialize meta-game strategies $\Sigma \in \bR^{N \times N}$ with zeros.
    \For{$i \in \{1, \dots, N-1\}$}
    \State $\Sigma_{i+1,1:i} \gets \textsc{SOLVE-Nash}(\cU_{1:i, 1:i})$  
    \Comment{LP \nash solver, see \citet{shoham2008multiagent}.}
    \EndFor
    \State \Return{$\Sigma$}
    \EndFunction
  \end{algorithmic}
  \end{small}
\end{algorithm}

One prominent example is \psronash, where $\pi_i$ is optimized to best-respond to the \nash mixture policy over $\Pi_{<i}$. This particular meta-graph solver is shown in Algorithm~\ref{alg:f_psro_nash}.

\subsection{Convergence to N-step Best-Responses via \neupl}
\label{sec:convergence_proof}

Under certain assumptions on the best-response operator, interaction graph, and meta-graph solver (MGS) we can construct proofs that \neupl converges to an $N$-step best-response. We introduce the term \emph{grounded} (Section~\ref{app:convergence_proofs}) to refer to interaction graphs and MGS that have a structure that imposes convergence to a unique set of policies. Certain interaction graphs are grounded, in particular, lower-triangular graphs are one such class which describe an $N$-step best response. In addition, certain MGSs are grounded, in particular, ones that operate on the sub-payoff and output a lower-triangular interaction graph, $\cF : \cU_{<i, <i} \to \Sigma_{i,<i}$. The lower-triangular maximum entropy Nash equilibrium (MENE) is one such grounded MGS. Therefore with sufficiently large $N$, \neupl will converge to a normal-form Nash equilibrium. See Section~\ref{app:convergence_proofs} for the full definitions, theorems and proofs.

\subsection{Neural Population Learning by RL}

\label{sec:neupl_by_rl}

We now define the discounted return maximized by $\Pi_\theta(\cdot | o_{\le t}, \sigma_i)$ in Equation~\ref{eq:objective}. We denote $P(\sigma_i)$ as the probability distribution over policy $i$'s opponent identities $\sigma_j \in \{\sigma_1, \dots, \sigma_N\}$. Intuitively, each policy is maximizing its expected returns in the underlying game under a double expectation: the first is taken over its opponent distribution, with $\sigma_j \sim P(\sigma_i)$ and the second taken under the game dynamics partly defined by the pair of policies $(\Pi_\theta(\cdot|o_{\le t}, \sigma_i), \Pi_\theta(\cdot|o'_{\le t}, \sigma_j))$. 

\begin{equation}
\label{eq:objective}
    J_{\sigma_i} = \expt_{\sigma_j \sim P(\sigma_i)} \Big[
        \expt_{a \sim \Pi_\theta(\cdot | o_{\le t}, \sigma_i), a' \sim \Pi_{\theta}(\cdot | o'_{\le t}, \sigma_j)} 
        \big[\sum_t r_t \gamma^t\big]
    \Big]
\end{equation}

To optimize $\np$ by RL, we jointly train an opponent-conditioned action-value\footnote{We present the case for an action-value function but a value function could be used instead.} function, approximating the expected return of choosing an action $a_t$ given an observation history $o_{\le t}$, following $\Pi_\theta(\cdot|o_{\le t}, \sigma_i)$ thereafter in the presence of $\Pi_\theta(\cdot|o'_{\le t}, \sigma_j)$, denoted by $Q(o_{\le t}, a_t, \sigma_i, \sigma_j) = \expt_{
\Pi_\theta(\cdot|o, \sigma_i), \Pi_\theta(\cdot|o', \sigma_j)}[\sum_{\tau=t}^{t + T} \gamma^{\tau-t} r_{\tau} | o_{\le t}, a_t]$ with $\gamma$ the discount factor. In the case of {\sc ABR} by deep RL, we could additionally approximate the expected payoffs matrix $\cU$ by learning a payoff estimator $\phi_\omega(\sigma_i, \sigma_j)$ minimizing the loss $\cL_{ij} = \expt_{o \sim \cD} \big[ (\phi_\omega(\sigma_i, \sigma_j) - \expt_{a \sim \Pi_\theta(\cdot | o, \sigma_i)}[Q_\theta(o, a, \sigma_i, \sigma_j)])^2 \big]$ where the expectation is taken over the state visitation distribution $\cD$ defined by the pair of policies and the environment dynamics $\cP$. In other words, $\phi_\omega(\sigma_i, \sigma_j)$ approximates the expected return of $\Pi_\theta(\cdot|o_{\le},\sigma_i)$ playing against $\Pi_\theta(\cdot|o'_{\le},\sigma_j)$. By connecting payoff matrix $\cU$ to the learned $Q$ function, we can evaluate $\cU$ efficiently, without explicitly evaluating all policies at each iteration. 

Finally, we propose Algorithm~\ref{alg:neupl_const_f} in the setting where the meta-graph solver $\Sigma^{\text{const}} \gets \cF(\cU)$ is a {\it constant} function and extends it to Algorithm~\ref{alg:neupl_adaptive_f} where the meta-graph solver is adaptive in $\cU$. For instance, population learning algorithms such as Fictitious Play \citep{brown1951} implement static interaction graphs while algorithms such as \psro \citep{lanctot2017unified} rely on adaptive MGS.

\section{Experiments}

In this section, we validate different contributions of \neupl across several domains. 
First, we verify the generality of \neupl from two aspects: {\bf a)} \neupl recovers expected results of existing population learning algorithms \citep{brown1951, heinrich2015fictitious, lanctot2017unified} on the classical game of \rps where we can visualize the learned policy population over time and; {\bf b)} \neupl generalises to the spatiotemporal, partially observed strategy game of \rws \citep{vezhnevets2020options}, where players must infer opponent behaviours through tactical interactions. 
Second, we show that \neupl induces skill transfer across policies, enabling the discovery of exploiters to strong opponents that would have been out-of-reach otherwise. This property translates to improved efficiency and performance compared to \psronash baselines, even under favorable conditions.
Lastly, we show that \neupl scales to the large-scale Game-of-Skills of MuJoCo Football \citep{liu2019emergent} where a concise sequence of best-responses are learned, reflecting the prominent transitive skill dimension of the game.

In all experiments, we use Maximum A Posterior Optimization (MPO, \cite{abdolmaleki2018maximum}) as the underlying RL algorithm, though any alternative can be used instead. Similarly, any conditional architecture can be used to implement $\np$. Our specific proposal reflects the spinning-top geometry \citep{czarnecki2020real} so as to encourage positive transfers across polices. Further discussions on the network design is available in Appendix~\ref{app:network_architecture}.

\subsection{Is \neupl General?}

\label{sec:generality}

\begin{figure}
\centering
\includegraphics[width=\textwidth]{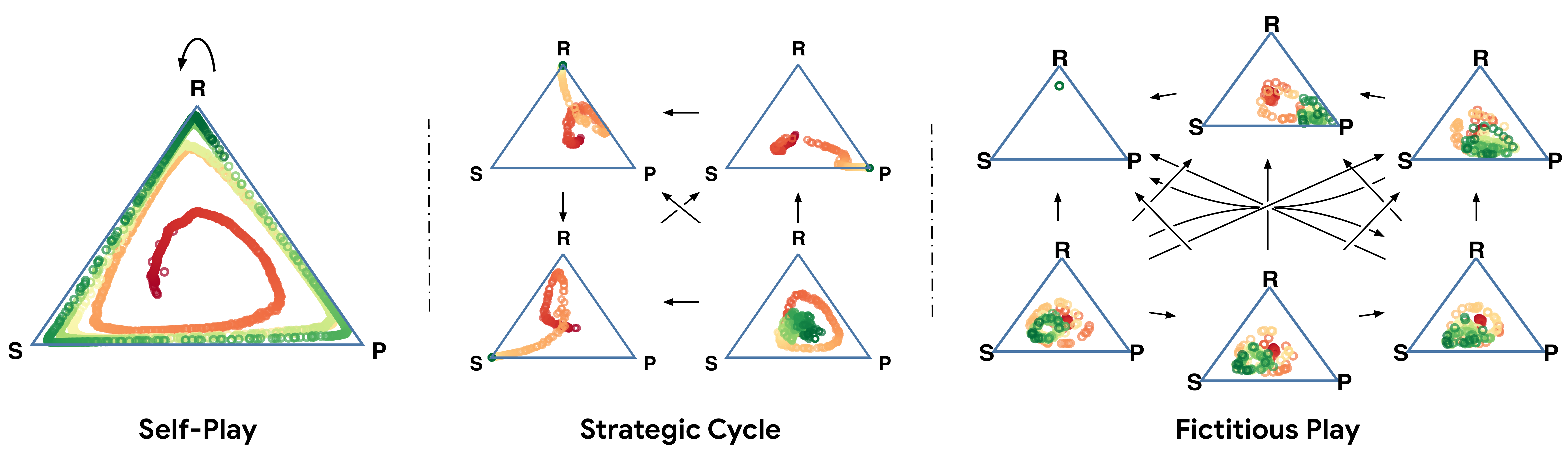}
\caption{Neural Population Learning in \rps induced by static interaction graphs. Policy distributions are colored by training iteration from red (earliest) to green (latest). {\bf (Left)} learning by self-play. {\bf (Middle)} a neural population of 4 strategies exploring a strategic cycle. {\bf (Right)} a neural population of 6 strategies iteratively best respond to the average previous strategies.}
\label{fig:rps_fixed_graph}
\end{figure}

\paragraph{Static Interaction Graphs} Figure~\ref{fig:rps_fixed_graph} illustrates the effect of {\sc NeuPL-RL-Static} implementing popular population learning algorithms in the purely cyclical game of \rps. {\rm Figure~\ref{fig:rps_fixed_graph} (Left)} shows that learning by self-play leads to the policy cycling through the strategy space indefinitely, as expected in such games \citep{balduzzi2019openended}. By contrast, {\rm Figure~\ref{fig:rps_fixed_graph} (Middle)} shows the effect of a specialized graph that encourages the discovery of a strategic cycle as well as a final strategy that trains against the others equally. As a result, we obtain a population that implements the pure strategies of the game as well as an arbitrary strategy. This final strategy needs not be the \nash of the game as any strategy would achieve a return of zero. Finally, {\rm Figure~\ref{fig:rps_fixed_graph} (Right)} recovers the effect of Fictitious Play \citep{brown1951, heinrich2015fictitious} where players at each iteration optimize against the ``average'' previous players. We initialize the initial sink strategy to be exploitable, heavily biased towards playing rock\footnote{This choice is important: had the sink strategy been initialized to be the \nash mixture strategy, subsequent strategies would be ``uninteresting'' as no strategy can improve upon the sink strategy.}. The resulting population, represented by $\{\Pi_\theta(a|o_\le,\sigma_i)\}^6_{i=1}$, learned to execute 6 strategies, starting with ``pure-rock'' which is followed by its best-response ``pure-paper'', with subsequent strategies gravitating towards the \nash equilibrium (NE) of this game.

\begin{figure}
\centering
\includegraphics[width=\textwidth]{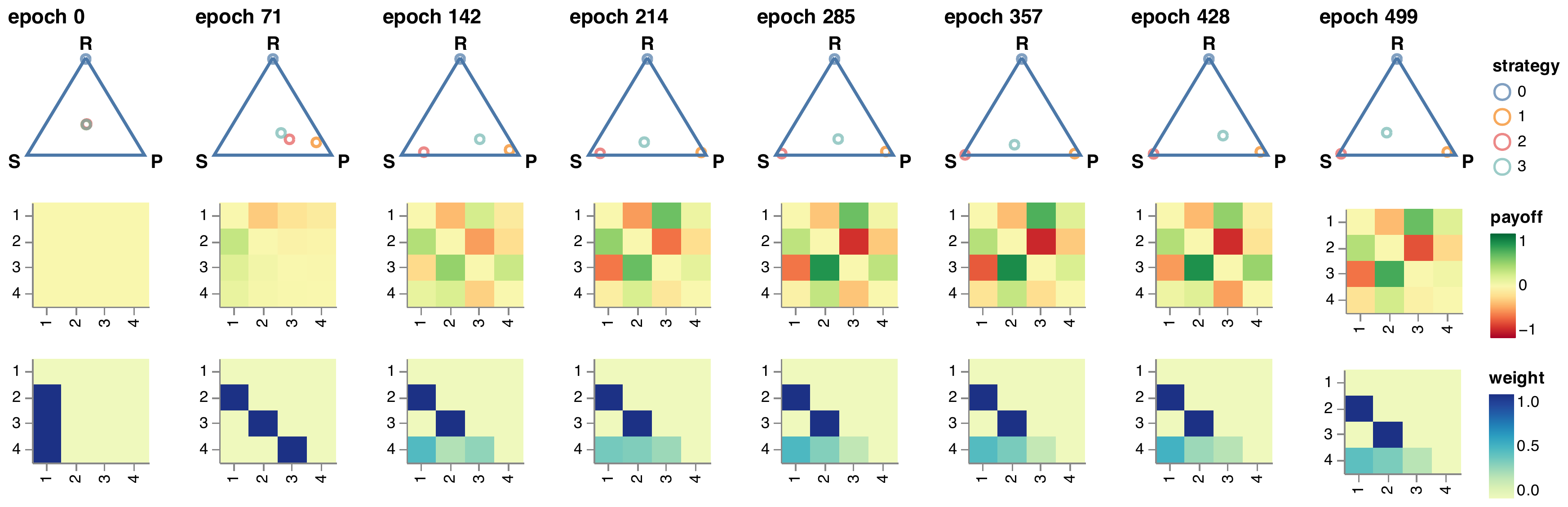}
\caption{\neupl in \rps induced by an adaptive interaction graph implementing \psronash. An epoch lasts 10 iterations. {\bf (Top)} strategy space explored by the neural population of strategies over time. {\bf (Middle)} the learned payoff estimates between the population of strategies. {\bf (Bottom)} Adjacency matrices representing interaction graphs as responses to payoff matrices.}
\label{fig:rps_adaptive_graph}
\end{figure}

\paragraph{Adaptive Interaction Graphs} %
Figure~\ref{fig:rps_adaptive_graph} illustrates that {\sc NeuPL-RL-Adaptive} with $\fpsron$ recovers the expected result of \psronash in \rps. Specifically, the first three strategies gradually form a strategic cycle and converge to the pure strategies of the game. As the cycle starts to form, the final strategy, best-responding to the NE over previous ones, shifted to optimize against a mixture of pure strategies. These results further highlight an attractive property of NeuPL --- the number of distinct strategies represented by the neural population grows dynamically in accordance with the meta-graph solver (comparing $\Sigma$ at epoch 0 and 71). In particular, a distinct objective for strategy $i+1$ is introduced if and only if $\Pi_\theta(\cdot|o_\le,\sigma_i)$ gains support in the NE over $\Pi_\theta^{\Sigma_{\le i}}$. Unlike prior works \citep{lanctot2017unified, mcaleer_pipeline_2021}, the effective population size is driven by the meta-graph solver, rather than handcrafted truncation criteria. This is particularly appealing in real-world games, where we cannot efficiently determine if a policy has converged to a best-response or temporarily plateaued at local optima. In NeuPL, one needs to specify the maximum number of policies $N$ to be represented in the neural population, yet the population need not optimize $N$ distinct objectives at the start, nor is it required to converge to $N$ distinct policies at convergence. The number of distinct polices represented in the neural population is a function of the strategic complexity of the game, the capacity of the neural network, the effectiveness of the ABR operator and the nature of the meta-graph solver. We recall this property in \rws and in MuJoCo Football, which afford varying degrees of strategic complexities.

\paragraph{Stochastic Games} \rws \citep{vezhnevets2020options} extends \rps to the spatiotemporal and partially-observed setting. Using first-person observations of the game (a 4x4 grid in front of the agent), each player collects resources representing ``rock'', ``paper'' and ``scissors'' so as to counter its opponent's hidden inventory. At the end of an episode or when players confront each other through ``tagging'', players compare their inventories and receive rewards accordingly. To do well, one must infer opponent behaviours from its partial observation history $o_{\le t}$ --- if ``rock''s went missing, then the opponent may be collecting them; if the opponent ran past ``scissors'', then it may not be interested in it. We describe the environment in details in Appendix~\ref{app:rws_environment}.
Figure~\ref{fig:rws_adaptive_graph} shows that \neupl with $\fpsron$ leads to a population of sophisticated policies. As before, we set the initial sink policy to be exploitable and biased towards picking up ``rock''s. Early in training, we note that the first three policies of the neural population implement the pure-resource policies of ``rock'', ``paper'' and ``scissors'' respectively, as evidenced by their relative payoffs. In contrast to \rps, the mixture of pure-resource policies is exploitable in the sequential setting, where the player can observe its opponent before implementing a counter strategy. Indeed, policy $\Pi_\theta(\cdot|o_\le,\sigma_4)$ observes and exploits, beating the mixture policies at epoch 680. Following $\fpsron$, $\Pi_\theta(\cdot|o_\le,\sigma_5)$ updates its objective to focus solely on this newfound NE over $\Pi^{\Sigma_{<5}}_\theta$, developing a deceptive counter strategy.

\begin{figure}[ht]
\centering
\includegraphics[width=\textwidth]{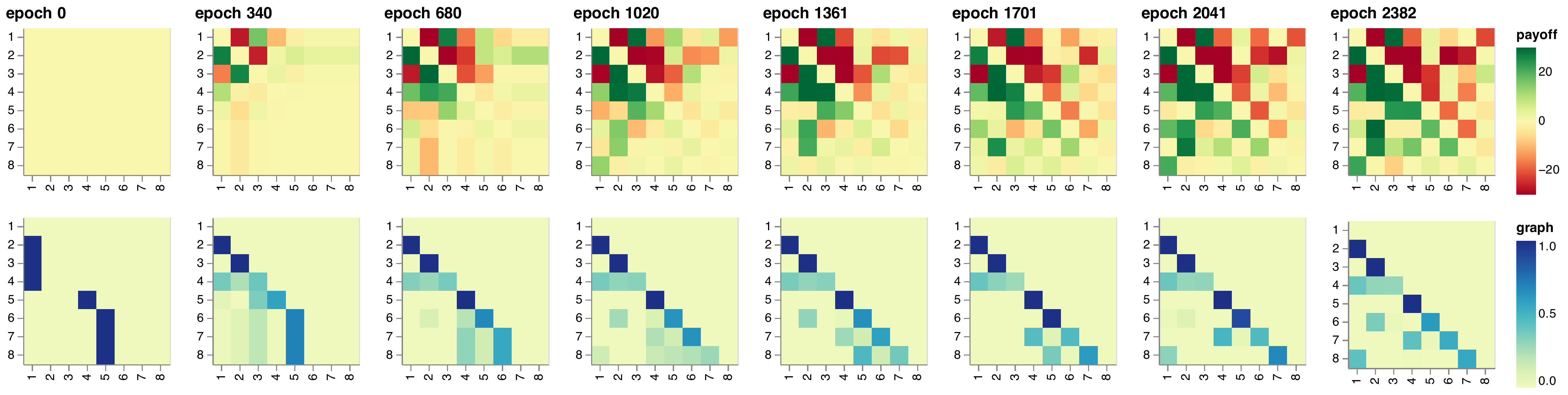}
\caption{A \neupl population developing an increasingly sophisticated set of diverse policies in \rws. The interaction graph is updated every 1,000 gradient updates (an epoch). {\bf (Top)} the learned payoff estimates $\cU$ between the neural population of policies as training progresses. {\bf (Bottom)} interaction graphs $\Sigma \gets \fpsron(\cU)$ as a response to the corresponding payoff matrix.}
\label{fig:rws_adaptive_graph}
\end{figure}

{\rm Figure~\ref{fig:rws_rpp_neupl_size} (Left)} quantitatively verify that \neupl implementing $\fpsron$ indeed induces a policy population that becomes robust to adversarial exploits as the population expands. To this end, we compare independent policy populations by their Relative Population Performance (RPP\footnote{A negative $\textsc{RPP}(\fB, \fD)$ implies that all mixture policies in $\fB$ are exploitable by a mixture policy in $\fD$.}, Appendix~\ref{def:rpp}) across 4 independent \neupl experiments with different maximum population sizes. As expected, we observe that neural populations representing more best-response iterations are less exploitable. Additionally, a population size greater than 8 has limited impact on learning, both in terms of marginal exploitability benefits and rate of improvement against smaller populations (shown in blue and orange). This further mitigate the concern of using a larger maximum population size than necessary. In fact, {\rm Figure~\ref{fig:rws_rpp_neupl_size} (Right)} shows that the effective population sizes $|\textsc{Unique}(\Sigma)|$ plateau at 12 across maximum neural population sizes. We hypothesise that this is due to the increased difficulty in discovering reliable exploiter beyond 12 iterations in this domain --- $\fpsron$ forms a curriculum where new objectives are introduced if and only if the induced meta-game contains novel meta-game strategies worth exploiting, regardless of the maximum population size specified. We emphasize that the only variable across these independent runs is their respective maximum neural population size.

\begin{figure}[ht]
\centering
\includegraphics[width=\textwidth]{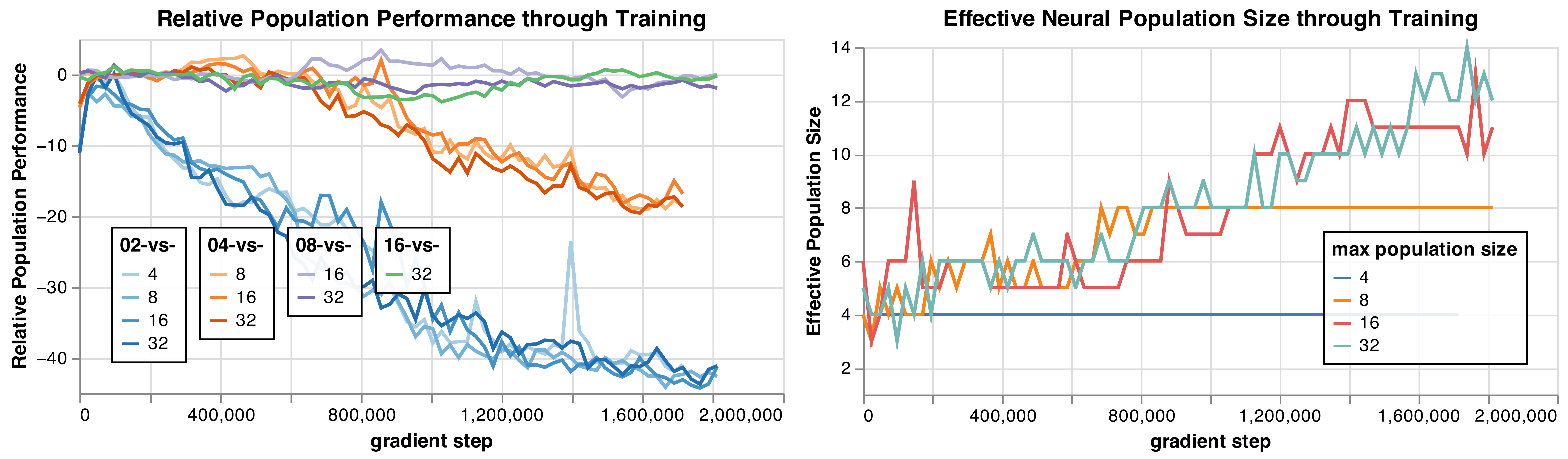}
\caption{{\bf (Left)} Relative Population Performance comparing independent \neupl runs of different maximum population sizes. {\bf (Right)} Effective population size over time, driven by $\fpsron$.}
\label{fig:rws_rpp_neupl_size}
\end{figure}

\subsection{Does \neupl enable Transfer Learning across Policies?}

\label{sec:positive_transfer}

In contrast to prior works that train new policies iteratively {\it from scratch}, \neupl represents diverse policies with explicit parameter sharing. Figure~\ref{fig:rws_params_sharing} compares the two approaches and illustrates the significance of transfer learning across iterations. 
Specifically, we verify that the shared representation learned by training against fewer, weaker policies early in training, facilitates the learning of exploiters to stronger, previously unseen opponents. 
To this end, a set of randomly initialized MPO agents with partially transferred parameters from different epochs of the experiment shown in Figure~\ref{fig:rws_adaptive_graph} are trained against fixed mixture policies defined by $\npstar$, with $\theta^*$ and $\Sigma^* = \fpsron(\cU^*)$ obtained at epoch 1,200 of the same experiment. %
In other words, the objective for each agent is to beat a fixed NE over pre-trained policies $\{\Pi_{\theta^*}(a|o_\le,\sigma^*_k)\}^n_{k=1}$, with a specific $n$. Figure~\ref{fig:rws_params_sharing} shows the learning progressions of the agents for $n \in \{2, 4, 7\}$, with transferred parameters taken at epoch 0 (red) upto 1,000 (green). 

\begin{figure}
\centering
\includegraphics[width=\textwidth]{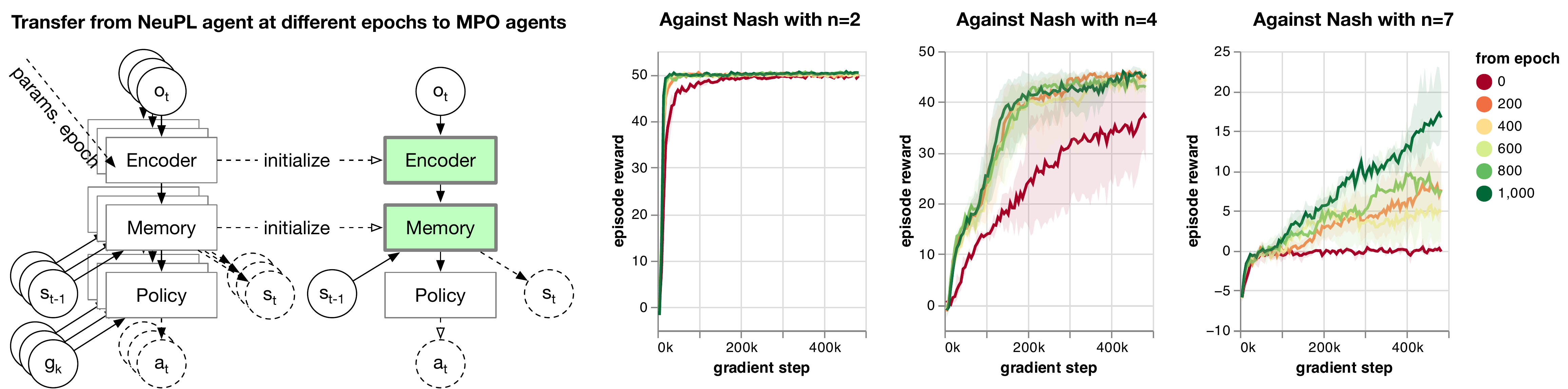}
\caption{Learning progression of exploiters against incremental \nash mixture policies obtained via \neupl training (as shown in Figure~\ref{fig:rws_adaptive_graph}). The red curve corresponds to learning an exploiter from random initialization fully and the green curve corresponds to transferring encoder and memory components from the trained \neupl network at epoch 1,000. Each experiment is repeated five times.}
\label{fig:rws_params_sharing}
\end{figure}

Against an easily exploitable opponent mixture (NE over the first two pure-resource policies), an agent with randomly initialized parameters (red) remains capable of learning an effective best response, albeit at a slower pace. This difference becomes much more apparent against competent mixture policies that execute sophisticated strategies (NE over the first 4 or 7 policies) --- the randomly initialized agent failed to counter its opponent despite prolonged training while agents with partially transferred parameters successfully identified exploits, leveraging effective representation of the environment dynamics and of diverse opponent strategies. By transferring skills that support sophisticated strategic decisions across iterations, \neupl enables the discovery of novel policies that are inaccessible to a randomly initialized agent. In other words, learning incremental best responses becomes easier, not harder, as the population expands. This is particularly attractive in games where strategy-agnostic skill learning is challenging in itself \citep{vinyals2019grandmaster, liu_motor_2021}. 

\subsection{Does \neupl Outperform Comparable Baselines?}

\begin{figure}[ht]
\centering
\includegraphics[width=\textwidth]{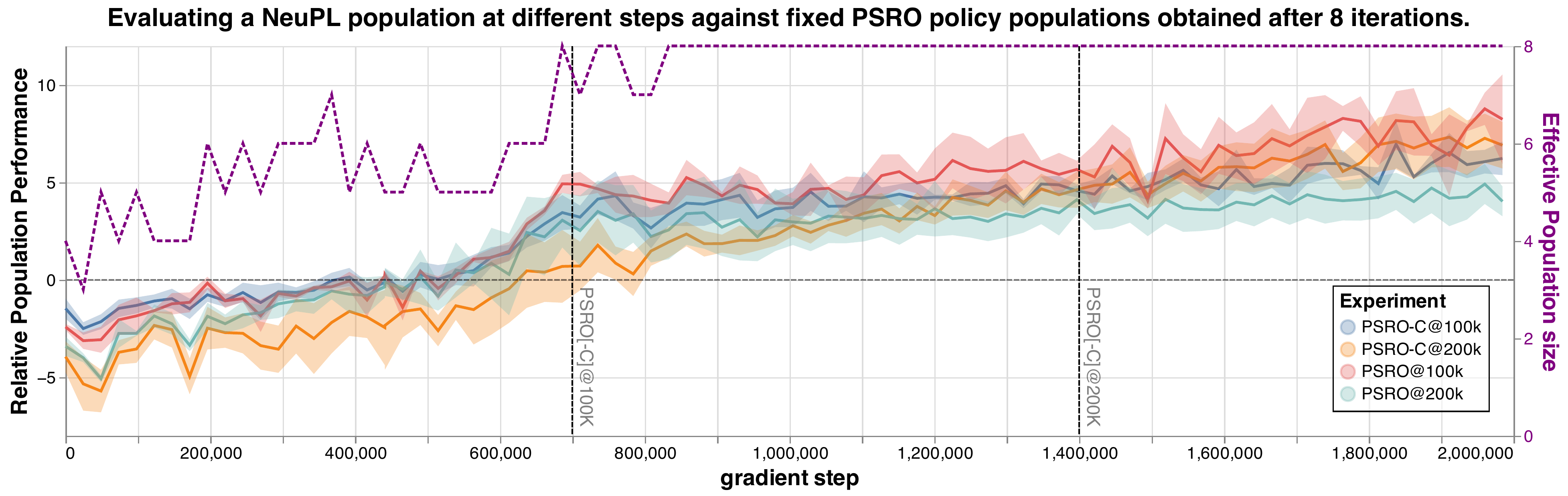}
\caption{Relative Population Performance between a \neupl population and policy populations obtained in \psro baselines. Each \psro variant is repeated over 3 trials, shown in shades.}
\label{fig:rws_rpp_psro}
\end{figure}

We compare a \neupl population implementing $\fpsron$ to 4 comparable baselines implementing variants of \psro. Since \psro does not prescribe a truncation criteria at each iteration, we investigate \psro baselines with 100k and 200k gradient steps per iteration respectively. Further, we consider the effect of continued training across iterations, by initializing new policies with the policy obtained at the end of the preceding iteration instead of random initialization. We refer to this continued variant as \psro-C. All \psro populations are initialized with the same initial policy used for the \neupl population. Figure~\ref{fig:rws_rpp_psro} illustrates the quantitative benefits of \neupl, measuring RPP between a \neupl population of maximum population size of 8 against the {\em final} population of 8 policies obtained via \psro after 7 iterations. The vertical dashed lines indicate that both the \neupl population and the \psro population have cumulatively undergone identical amount of training to allow for fair comparison. In purple, we show the effective population size of the \neupl population which has been shown previously in Figure~\ref{fig:rws_rpp_neupl_size}. We make the following observations: i) with a population size of 8, the \neupl population successfully exploits \psro baselines representing an equal number of policies, even if the latter performed twice as many gradient updates; ii) the increase in RPP coincides with an increase in the effective population size, from 5 to 8, reaching the maximum number of distinct policies that this \neupl population can represent; iii) the amount of training each \psro generation received has limited impact on the robustness of policy populations at convergence. This corroborates our observations in Figure~\ref{fig:rws_params_sharing}, where the agent (in red) failed to exploit strong opponents despite continued training. Interestingly, \psro-C proves equally exploitable. We hypothesize that the learned policies failed to develop reusable representations that can support diverse strategic decisions. Details of the \psro baselines across 3 seeds are available in Appendix~\ref{app:psro_baselines}, demonstrating the strategic complexities captured by the \psro baseline populations.

\subsection{Does \neupl Scale to Highly Transitive Game-of-Skills?}

\label{sec:scale_to_soccer}

If a game is purely transitive, all policies share the same best-response policy. In this case, self-play offers a natural curriculum that efficiently converges to this best-response \citep{balduzzi2019openended}. Nevertheless, this approach is infeasible in real-world games as one cannot rule out strategic cycles in the game without exhaustive policy search. 
The MuJoCo Football domain \citep{liu2019emergent} is one such example: it challenges agents to continuously refine their motor control skills while coordinated team-play intuitively suggests the presence of strategic cycles. In such games, \psro is a challenging proposal as it requires carefully designed truncation criteria. If an iteration terminates prematurely due to temporary plateaus in performance, {\it good-}responses are introduced and convergence is slowed unnecessarily; if iterations rarely terminate, then the population may unduly delay the representation of strategic cycles. In such games, \neupl offers an attractive proposal that retains the ability to capture strategic cycles, but also falls back to self-play if the game appears transitive.

Figure~\ref{fig:mujoco_adaptive_graph} shows the learning progression of \neupl implementing $\fpsron$ in this domain, starting with a sink policy that exerts zero torque on all actuators. We observe that $\Pi_\theta(\cdot|o_\le, \sigma_2)$ exploits the sink policy by making rapid, long-range shots which is in turn countered by $\Pi_\theta(\cdot|o_\le, \sigma_3)$, that intercepts shots and coordinates as a team to score. Impressively, the off-ball blue player learned to blocked off defenders, creating scoring opportunities for its teammate. With the interaction graph focused on the lower diagonal elements, this training regimes closely matches that of self-play.

\begin{figure}[ht]
\centering
\includegraphics[width=\textwidth]{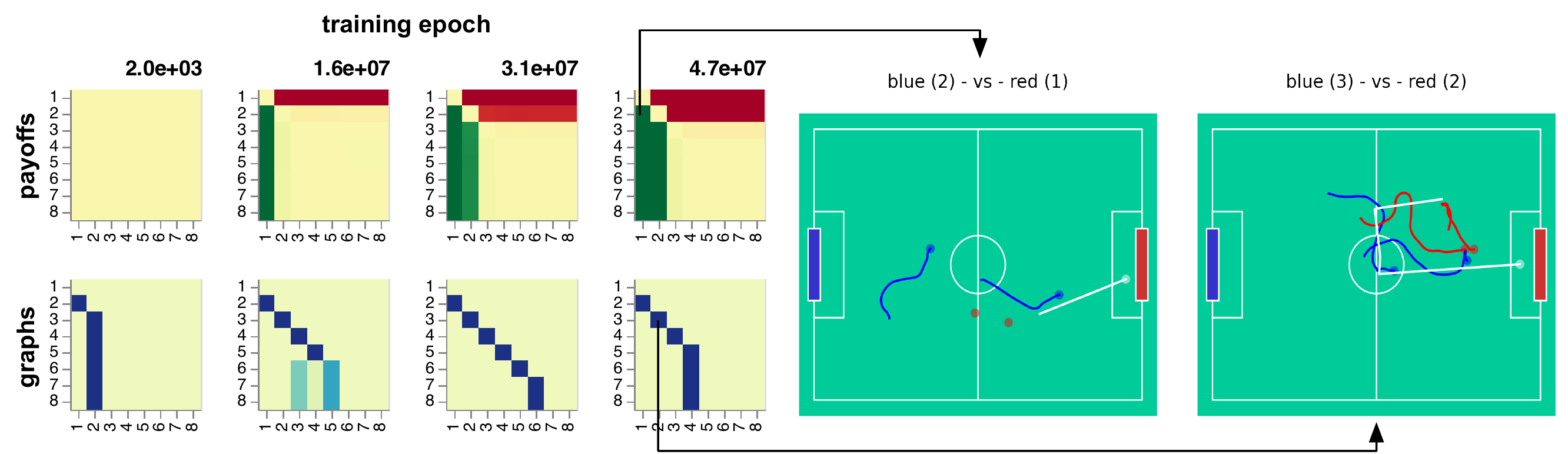}
\caption{\neupl in 2-vs-2 MuJoCo Football implementing $\fpsron$. The red/blue/white traces correspond to the trajectories of red/blue players and the ball respectively.}
\label{fig:mujoco_adaptive_graph}
\end{figure}

\section{Related Work}
\label{sec:related}

Prior works attempted at making population learning scalable, motivated by similar concerns as ours. \citet{mcaleer_pipeline_2021} proposed pipeline \psro (\textsc{P2SRO}) which learns iterative best-responses concurrently in a staggered, hierarchical fashion. \textsc{P2SRO} offers a principled way to make use of additional computation resources while retaining the convergence guarantee of \psro. Nevertheless, it does not induce more efficient learning per unit of computation cost, with basic skills re-learned at each iteration albeit asynchronously. In contrast, \citet{smith2020iterative} focused on the lack of transfer learning across best-response iterations and proposed ``Mixed-Oracles'' where knowledge acquired over previous iterations is accumulated via an ensemble of policies. In this setting, each policy is trained to best-respond to a meta-game {\it pure} strategy, rather than a {\it mixture} strategy as suggested by the meta-strategy solver. To approximately re-construct a best-response to the desired mixture strategy, {\it Q-mixing} \citep{smith2020learning} re-weights expert policies, instead of retraining a new policy. In comparison, NeuPL enables transfer while optimising Bayes-optimal objectives directly.

\section{Conclusion and Future Work}

We proposed an efficient, general and principled framework that learns and represents strategically diverse policies in real-world games within a single conditional-model, making progress towards scalable policy space exploration. In addition to exploring suitable technique from the multi-task, continual learning literature, going beyond the symmetric zero-sum setting remain interesting future works, too, as discussed in Appendix~{\ref{app:more_generality}}.

\bibliography{iclr2022_conference}
\bibliographystyle{iclr2022_conference}

\newpage
\appendix

\section{Methods}

\subsection{Population Learning Algorithms as Interaction Graphs}

\label{app:pl_as_graphs}

Figure~\ref{fig:interaction_graph_examples} shows several population learning algorithms represented as directed interaction graphs \citep{marta}. In particular, Policy Space Response Oracle with a \nash meta-game solver (\psronash) \citep{lanctot2017unified} proposes to learn a best-response $\pi_i$ to the \nash equilibrium over policies $\Pi_{<i}$ at each iteration, resulting in an adaptive interaction graph. Specifically, the out-edges of node $i+1$ are weighted according to the \nash equilibrium over the first $i$ policies (e.g. shown as $(1-p, p)$ for node 3 in Figure~\ref{fig:interaction_graph_examples}).

\subsection{Relative Population Performance}

Relative population performance was first introduced in \cite{balduzzi2019openended} as the performance of individual agents is meaningless in purely cyclical games. Intuitively, a relative population performance measure of $v(\fB, \fD)$ implies that there exists a mixture policy in the population $\fB$ that achieves a payoff at least $v(\fB, \fD)$ against all mixture policies in the opponent population $\fD$. This measure is defined formally in Definition~\ref{def:rpp}.

\begin{definition}[Relative Population Performance]
\label{def:rpp}
Given two populations of policies $\fB, \fD$ and let $(\bf{p}, \bf{q})$ be a \nash equilibrium over the zero-sum game on $\cU_{\fB, \fD} \in \bR^{M \times N}$, {\rm Relative Population Performance} measures their relative performance: $v(\fB, \fD) \eqdef \bf{p}^T \cdot \cU_{\fB, \fD} \cdot \bf{q}$.
\end{definition}

\subsection{NeuPL with Static Interaction Graph and ABR by RL}

In this section, we describe a specialised implementation of \neupl given {\it static} meta-graph solvers $\cF$ with an approximate best-response operator implemented via deep Reinforcement Learning. The algorithm is described in Algorithm~\ref{alg:neupl_const_f}.

\begin{algorithm}
  \begin{small}
  \caption{Neural Population Learning by RL (static $\cF$)}\label{alg:neupl_const_f}
  \begin{algorithmic}[1]
    \Function{Run-Episodes}{$\Sigma, \Pi_\theta, M$}  \Comment{Visualized in Figure~\ref{fig:neupl_architecture} (Right).}
        \State $\cD_{\Pi} \gets \{\}$; $\cD_{Q} \gets \{\}$  \Comment{Initialize actor, critic trajectory buffers.}
        \For{Episode $\in \{1, \dots, M\}$} \Comment{Generate trajectories for $M$ episodes.}
            \State Sample $\sigma_i \sim \Call{Unique}{\Sigma}$ uniformly (excluding sink nodes) and $\sigma_j \sim P(\sigma_i)$. \Comment{Match-making.}
            \State Sample trajectories $\Tau^{\sigma_i}, \Tau^{\sigma_j}$ from interactions between $(\Pi_\theta(\cdot|o, \sigma_i), \Pi_{\theta}(\cdot|o', \sigma_j))$.
            \State $\cD_\Pi \gets \cD_\Pi \cup \Tau^{\sigma_i}$; $\cD_Q \gets \cD_Q \cup \Tau^{\sigma_i} \cup \Tau^{\sigma_j}$.
        \EndFor
        \State \Return $\cD_{\Pi}, \cD_Q$
    \EndFunction
    \State
    \Function{NeuPL-RL-Static}{$\Sigma, T$}
    \Comment{$\Sigma \in \bR^{N \times N}$ the static interaction graph.}
    \State Initialize $\Pi_\theta(a|o, \sigma_i)$, the task-conditioned policy.
    \State Initialize $Q_\theta(o, a, \sigma_i, \sigma_j)$, the action-value function.
    \For{Iteration $t \in \{1, \dots, T\}$}  \Comment{$T$ the number of iterations to run for.}
        \State $\cD_\Pi, \cD_Q \gets \Call{Run-Episodes}{\Sigma, \Pi_\theta, M}$
        \State Optimize policy $\Pi_\theta$ from $\cD_\Pi$ and $Q_\theta$ from $\cD_Q$ by RL.
    \EndFor
    \EndFunction
  \end{algorithmic}
  \end{small}
\end{algorithm}

\subsection{NeuPL with Adaptive Meta-Graph Solvers and ABR by RL}

Building on Algorithm~\ref{alg:neupl_const_f}, we now extend to the case of {\it adaptive} meta-graph solvers $\cF$ that are functions of empirical payoff matrices $\cU$. Specifically, the set of meta-game strategies the set of policies seek to best-respond to are given by $\Sigma \gets \cF(\cU)$. This algorithm is described in Algorithm~\ref{alg:neupl_adaptive_f}.

\begin{algorithm}
  \begin{small}
  \caption{Neural Population Learning by RL (adaptive $\cF$)}\label{alg:neupl_adaptive_f}
  \begin{algorithmic}[1]
    \Function{NeuPL-RL-Adaptive}{$\cF, K, T, M, \epsilon$} \Comment{$\cF: \bR^{N \times N} \rightarrow \bR^{N \times N}$.}
    \State Initialize $\Pi_\theta(a|o_{\le}, \sigma_i)$, the neural population network.
    \State Initialize $Q_\theta(o_{\le}, a, \sigma_i, \sigma_j)$, the action-value function.
    \State Initialize $\phi_\omega(\sigma_i, \sigma_j)$, the empirical payoff estimator.
    \State Let $\Sigma_{c} \in \bR^{N \times N} \gets \mathbf{1/N}$. \Comment{All-to-all interactions.}
    \For{Epoch $\in \{1, \dots, K\}$} \Comment{$K$ epochs to run.}
        \State $\forall i, j \colon \cU_{ij} \gets \phi_\omega(\sigma_i, \sigma_j)$ the payoff matrix over $N$ policies. \Comment{Re-compute payoff estimates.}
        \State $\Sigma \in \bR^{N \times N} \gets \cF(\cU)$ the interaction graph over $N$ policies. \Comment{Update the interaction graph.}
        \For{Iteration $t \in \{1, \dots, T\}$}  \Comment{$T$ iterations per epoch.}
            \State $\cD_\Pi, \cD_Q \gets \Call{Run-Episodes}{\Sigma, \pi_\theta, M \times (1 - \epsilon)}$ \Comment{$M$ episodes per iteration.}
            \State $\_, \cD^\epsilon_Q \gets \Call{Run-Episodes}{\Sigma_{c}, \pi_\theta, M \times \epsilon}$ \Comment{$\epsilon$ the prop. of evaluation episodes.}
            \State Optimize policy $\pi_\theta$ from $\cD_\Pi$ and $Q_\theta$ from $\cD_Q \cup \cD^\epsilon_Q$ by RL.
            \State Optimize $\phi_\omega$ from $\cD_Q \cup \cD^\epsilon_Q$ by minimizing $\cL_{ij}$.
        \EndFor
    \EndFor
    \EndFunction
  \end{algorithmic}
  \end{small}
\end{algorithm}

\section{Experiments}
\subsection{Running-with-Scissors environment}
\label{app:rws_environment}

\begin{figure}
  \begin{center}
   \includegraphics[width=0.6\textwidth]{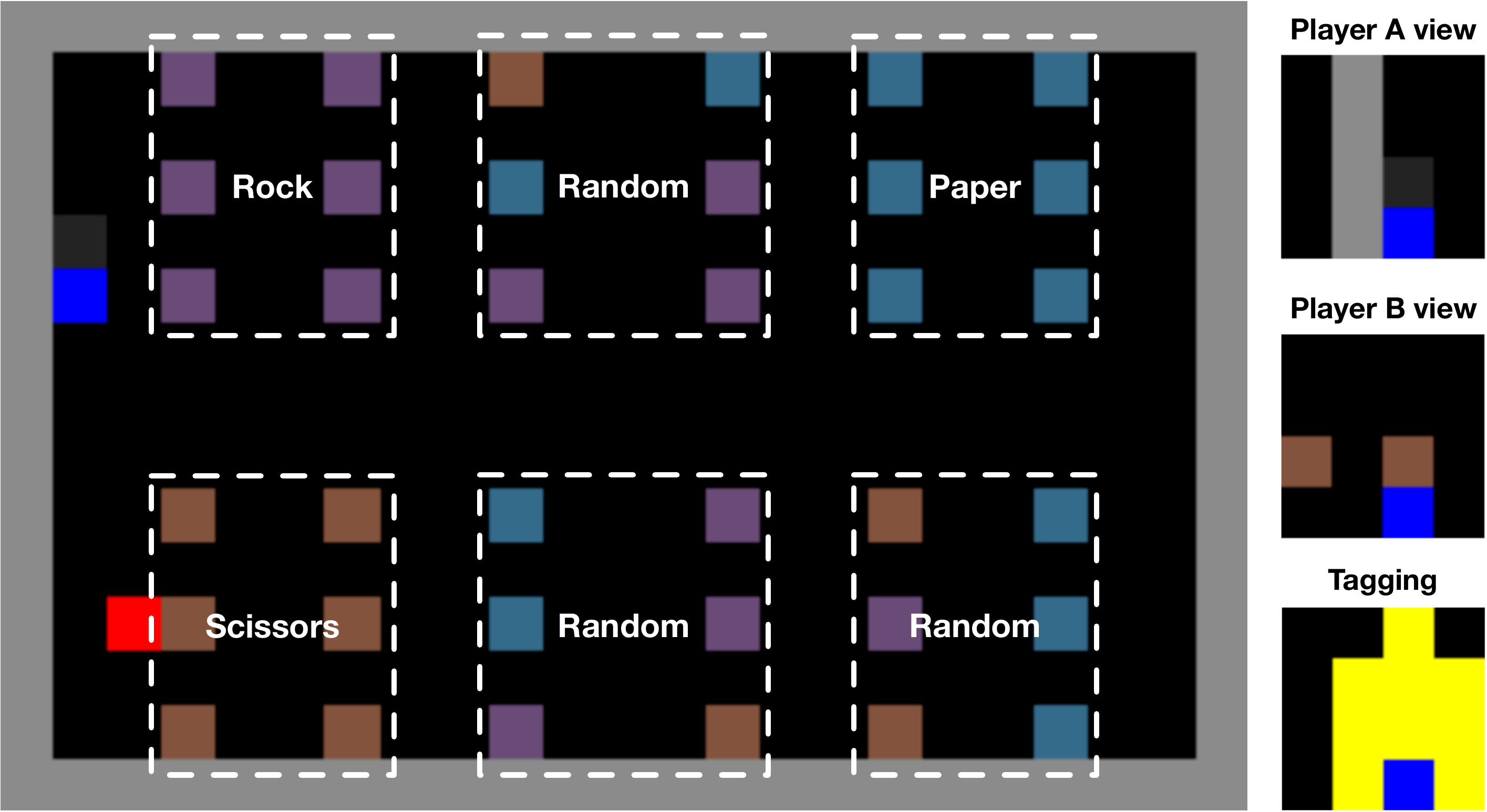}
  \end{center}
  \caption{An example view of the \rws environment upon initialization.}
  \label{fig:rws_environment}
\end{figure}

Figure~\ref{fig:rws_environment} shows an example view of the \rws environment at the start of an episode. The dashed squares shows the type of resources initialized in the enclosed area with some of them consistently initialized with one type of resources while others randomly initialized with one of the three types. On the right, we visualize the 4x4 pixel observations of the two players. The visual observation is oriented along each player's forward direction. In addition to the visual observations, each player observes their current inventory of the three types of resources, expressed in terms of their normalized ratios. Each player is initialized with an equal weight inventory at the start of an episode. To move around, a player can turn left, turn right, strafe left, strafe right, move forward or move backwards. Finally, each player can proactively seek out its opponent and ``tag'' it to terminate the game. A player is considered tagged if it falls into the tagging area of its opponent. Bottom right view illustrates the shape of the tagging area in front of a player. If neither player tags its opponent, the game ends after a fixed number of 500 steps. On the terminal step, the game resolves by comparing the inventory of both players and the rewards are assigned according to the classical anti-symmetric payoff matrix of \rps. On all other steps, both players receive zero rewards.

\subsection{Conditional Network Architecture for \neupl}
\label{app:network_architecture}

\begin{figure}
    \centering
    \includegraphics[width=\textwidth]{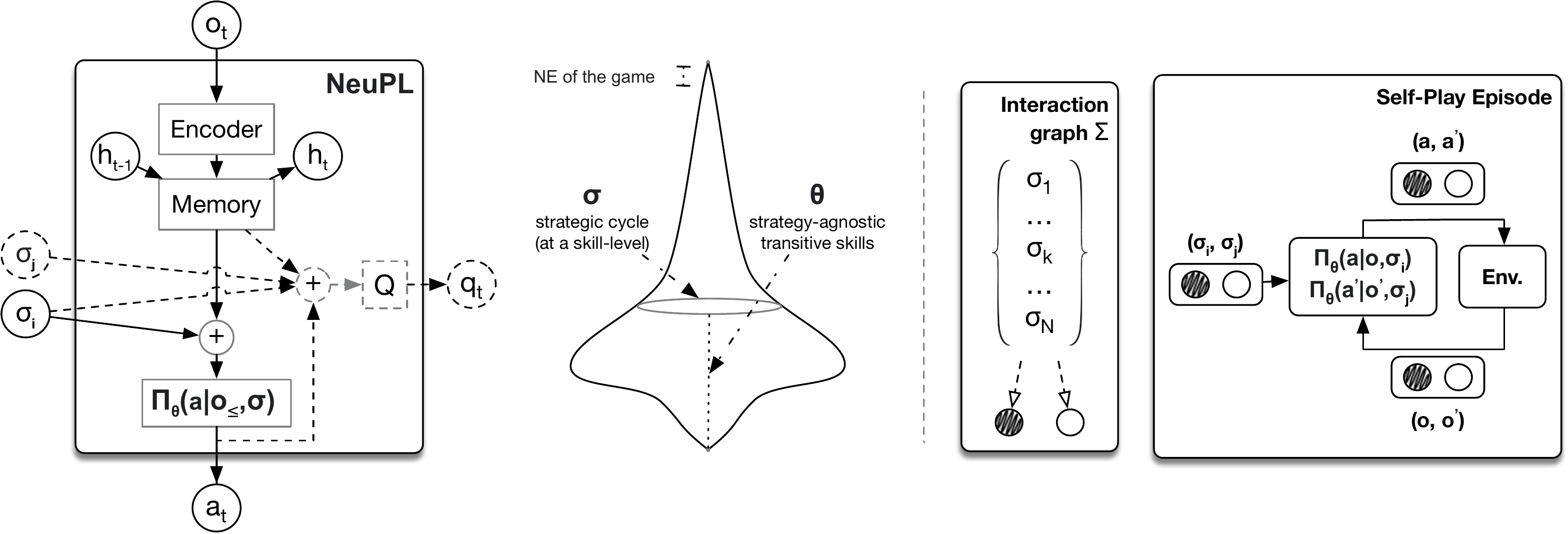}
    \caption{{\bf (Left)} The conditional neural population network architecture for a Q-learning based RL agent. The "$\bigoplus$" sign denotes a concatenation of all inputs tensors and dashed components are only used for training (and not necessary for acting); {\bf (Middle)} A diagram illustrating the separation between the strategy-agnostic transitive skill dimension captured by the shared parameters $\theta$ and the strategic cycles captured by $\sigma$ at a skill-level; {\bf (Right)} A diagram illustrating the mechanism of a \neupl self-play episode: at the start of an episode, a pair of conditioning vectors are sampled $(\sigma_i, \sigma_j)$ from the interaction graph $\Sigma$. The pair of policies, using the same conditional policy network $\Pi_\theta(a|o_{\le},\sigma)$, act simultaneously in the environment. The circles correspond to the conditioning vectors, observations and actions for the exploiter policy (shaded) and its opponent (blank).}
    \label{fig:neupl_architecture}
\end{figure}

{\rm Figure~\ref{fig:neupl_architecture} (Left)} illustrates the general network architecture of a \neupl population for a typical Q-learning based RL agent. In particular, the encoder, memory and policy head network modules are shared across all policies within the neural population with the conditioning variable $\sigma$ introduced at the final policy head layer via concatenation. This reflects the hypothetical Game-of-Skills geometry proposed in \citet{czarnecki2020real}, where each policy can be understood as a point within a ``spinning-top'' volume. Each policy is interpreted as a combination of strategy-agnostic transitive skills (e.g. movement skills for embodied agents; representational capability of past observations in partially-observed games) and a cyclic strategic element that decides which mixture policy to best-respond to, leveraging its transitive skills. 

At a high-level, the goal of \neupl is to represent a compact set of policies that corresponds to the top layer of this ``spinning-top'' geometry, or the NE of the game. As such, our proposed conditional network architecture facilitates the sharing of strategy-agnostic transitive skills across policies, while isolating strategy-specific decisions that may call for different actions in the same state within the final opponent-conditioned policy head network, mitigating the concern of negative transfer. We note that while we limited ourselves to the simplest conditioning architecture in this work, investigating conditional network architecture with suitable inductive biases could be an interesting future direction.

\subsection{Hyper-Parameters \& Training Infrastructure}

For experiments on the normal form game of \rps, the ``Encoder'' and ``Memory'' components are omitted and the policy head network $\pi_\theta$ alone is parameterized and learned, with its only input $g$. The policy and action-value networks are both parameterized by 4-layer MLPs, with 32 neurons at each layer. We use a small entropy cost of $0.01$, learning rates of $0.001$ and $0.01$ for the main networks and the MPO dual variables \citep{abdolmaleki2018maximum} respectively. As is typical in an MPO agent, the online network parameters are copied to target networks periodically at every 10 gradient steps. 
For the spatiotemporal, partially observable game of \rws, we use a small convolutional network to encode the agent's partial observation of the environment (a 4x4 grid surrounding itself). The encoding is further concatenated with the agent's own inventory information and encoded through a 2-layer MLP with 256 neurons each, with {\tt relu} activation. The memory module corresponds to a recurrent LSTM network, with a hidden size of 256. The policy, action-value networks are parameterized as 4-layer MLP networks. The learned payoff estimator $\phi_\omega(\sigma_i, \sigma_j)$ is parameterized as a 3-layer MLP network. The learning rate of the agent networks is set to $0.0001$ while the MPO dual variables are optimized with a learning rate of $0.001$. The online network parameters are copied to target networks every 100 gradient steps. For the MuJoCo Football environment, we used a domain specific encoder network that encodes egocentric observations of each player individually. The state representation is then implemented as a weighted sum of per-player embedding, using a learned attention mask. Instead of outputting a discrete categorical action profile in each state, the policy head network is trained to output a Gaussian distribution with learned mean and variance distribution parameters. As is common in continuous control literature, we used {\tt elu} as the activation function between network components. The rest of the network architecture and hyper-parameters remain consistent with that of \rws.
The same network architecture and hyper-parameters are used across all experiments for the same environment.

{\rm Figure~\ref{fig:neupl_architecture} (Right)} illustrates how self-play experience data is generated for a \neupl population at training time. At the start of an episode, a pair of conditioning vectors is sampled $(\sigma_i, \sigma_j)$ and used to condition the pair of policies interacting in the game. In \rws, each \neupl experiment uses 128 actor workers running the policy environment interaction loops and a single TPU-v2 chip running gradient updates to the agent networks. The same computational resources are used across different maximum population size for \neupl as well as the \psro baseline experiments. For MuJoCo Football, 256 CPU actors are used per learner. For the game of \rps, a single CPU worker is used instead. In \rws, all experiments converge within 2 days wall-clock time. For all experiments using \neupl, an evaluation split $\epsilon = 0.3$ is used.

\subsection{Baseline \psro experiments}
\label{app:psro_baselines}

\begin{figure}
\centering
\includegraphics[width=0.8\textwidth]{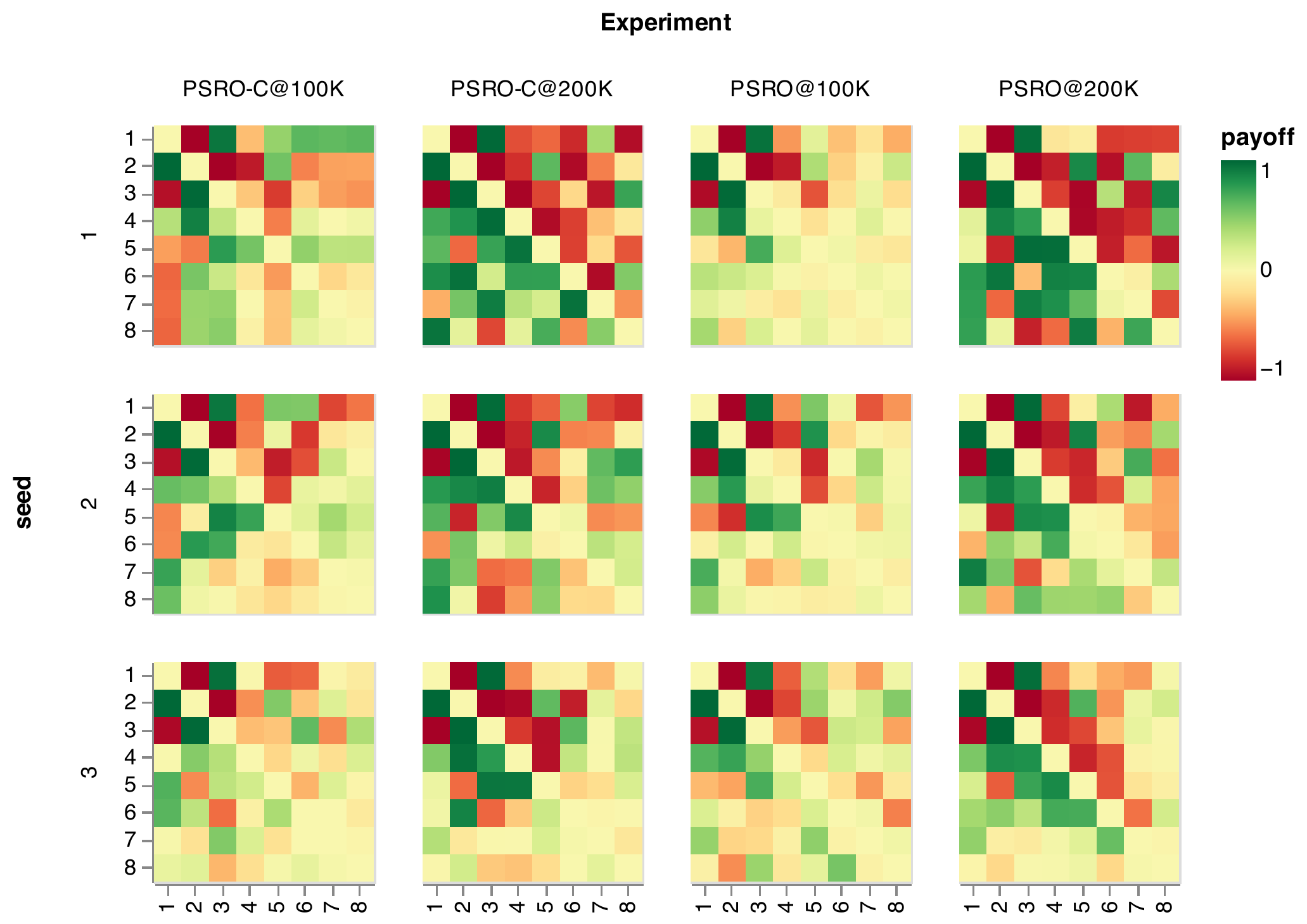}
\caption{Visualization of empirical payoff matrices for each \psro population after 8 iterations across three independent trials. Each iteration lasts 100k or 200k gradient steps. \psro-C indicates that the policy trained at iteration $i+1$ is initialized from the policy obtained at iteration $i$ instead of from random initialization. A payoff of 1.0 (-1.0) indicates a win (loss) probability of 100\% for the row player.}
\label{fig:rws_baseline_psro_payoffs}
\end{figure}

Figure~\ref{fig:rws_baseline_psro_payoffs} visualizes the empirical payoff matrices obtained in 12 independent experiments. In each experiment, we develop a discrete population of policies via 8 iterations of \psro training. In particular, we show the payoff matrices between trained policies where each \psro iteration lasts 100k or 200k gradient steps. \psro-C further shows the effect of initializing each iteration by the policy obtained at the end of the preceding iteration instead of random initialization as prescribed in \citep{lanctot2017unified}. We note that across all experiments, policy populations successfully recover the \rps dynamics among the first three policies. In most but not all trails, \psro also manages to learn a reliable exploiter of the mixture of the three pure strategies. Finally, we note that continued training appears to facilitate the discovery of richer strategic cycles, and so does allowing each \psro iteration to perform more gradient updates.

\subsection{Sensitivity to the choice of Hyper-Parameters}
\label{app:neupl_hparams}

\begin{figure}
\centering
\includegraphics[width=0.8\textwidth]{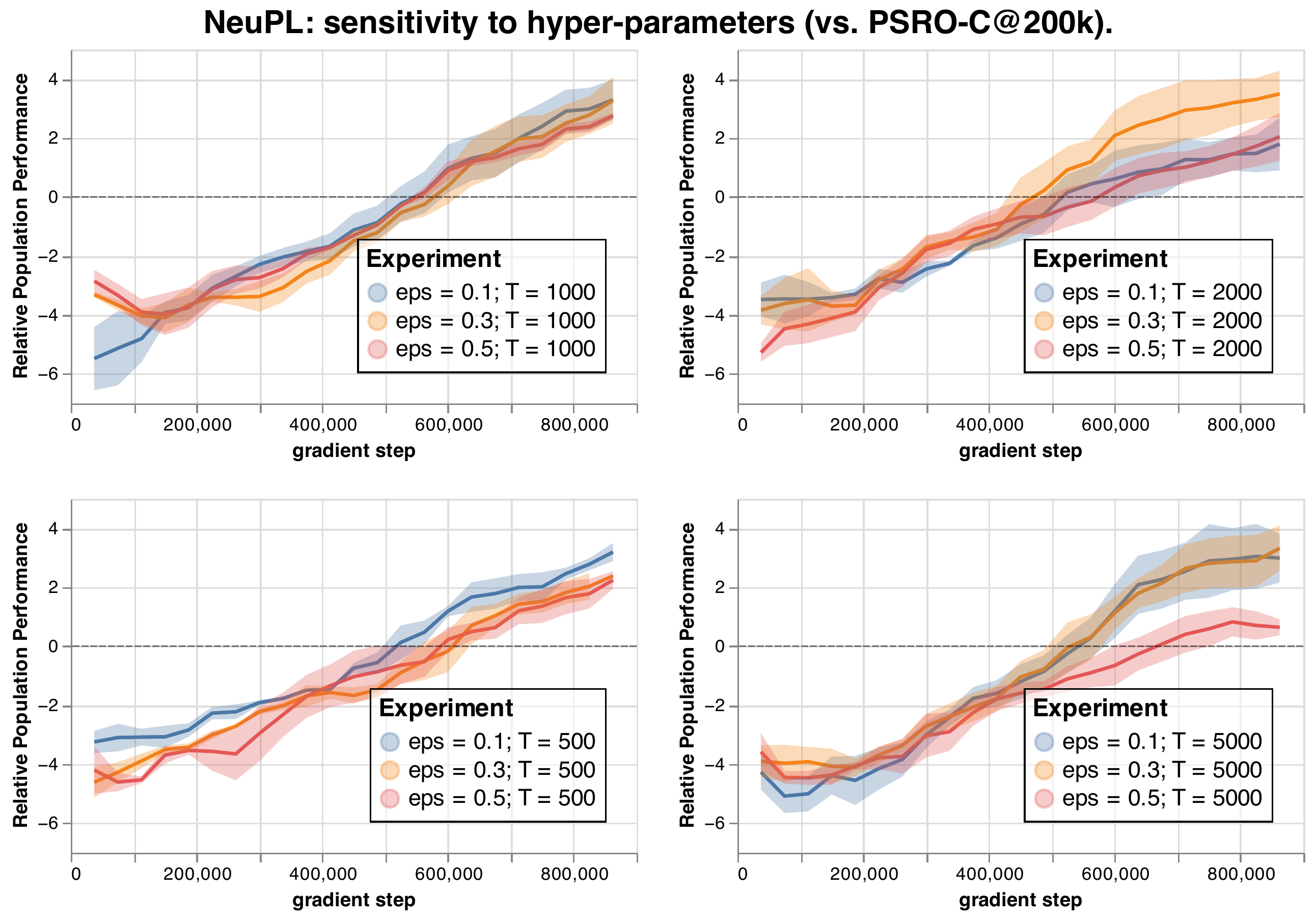}
\caption{Hyper-parameter sweep across proportion of matches used for empirical payoff evaluation ($\epsilon$) and interaction graph update interval in gradient steps ($T$). Population-level performance is measured in RPP, relative to the same held out population {\tt PSRO-C@200k} with {\tt seed = 1}.}
\label{fig:neupl_hparams}
\end{figure}

Figure~\ref{fig:neupl_hparams} investigates the sensitivity of \neupl to the choice of hyper-parameters. In particular, \neupl introduces two additional hyper-parameters: the proportion of simulation episodes used for learning pairwise empirical payoffs ($\sigma$ in Algorithm~\ref{alg:neupl_adaptive_f}) and the interval between interaction graph update ($T$ in Algorithm~\ref{alg:neupl_adaptive_f}). Intuitively, if $\epsilon$ is too low, the empirical payoff estimator risk being under-trained when its estimations are used by the meta-graph solver while a value too high would delay the policy learning due to insufficient simulation data being generated for policy learning. When it comes to the meta-graph update interval, a value too high may slow down strategic exploration if the approximate best-response operators have already converged to best-responses while a value too low may lead to noisy gradients in the optimization process, as the set of learning objectives may change too frequently.

In practice, we observe that \neupl is reasonably robust to these hyper-parameters across a wide range of choices. Across all experiments, we evaluate the \neupl population's relative population performance against a fixed, held-out population obtained via {\tt PSRO-C@200k, seed = 1}.

\subsection{Test-time Execution of Learned Policies}
\label{app:test_time_exec}

A distinctive property of population learning is that we obtain a population of policies that can be executed individually at test time. In \neupl, the population of policies is jointly defined by two elements: a conditional network $\Pi_\theta(\cdot | o_{\le t}, \sigma)$ and a set of meta-game strategies $\Sigma = \{ \sigma_i \}^N_{i=1}$, derived from the empirical payoffs between policies. To execute a mixture policy defined over the population, it suffice to execute $\Pi_\theta(\cdot | o_{\le t}, \hat{\sigma})$ with $\hat{\sigma} \sim Pr(\Sigma)$.

Specific mixture policies have well-understood properties. For instance, playing the NE mixture policy implies that an opponent who has access to the same set of policies is indifferent to playing any mixtures. This could be a principled option in the absence of prior knowledge about one's opponent.

\section{Convergence Proofs}
\label{app:convergence_proofs}

In this section we detail the conditions upon which NeuPL will converge to a unique sets of policies. First, we discuss the theory when the interaction graph is static and grounded (Section~\ref{subapp:grounded_interaction_graphs}). Then we discuss the theory when the interaction graph is a function of the payoff matrix, a so-called meta-graph solver (Section~\ref{subapp:grounded_meta_solvers}). Finally we discuss a specific function, Nash Equilibrium meta-graph solver, that is popular in the literature, and prove NeuPL's convergence to a normal-form Nash Equilibrium under certain conditions (Section~\ref{subapp:nash_meta_solvers}).

\subsection{Grounded Interaction Graphs}
\label{subapp:grounded_interaction_graphs}

\begin{assumption}[Unique, Exact, Finite Best Response]
Assume that we have access to a best response (BR) operator that responds to a policy, $\pi_n$, (or equivalently a mixture over policies) exactly converges to a unique best response, $\pi_{n+1} = \text{BR}(\pi_n)$, in finite time.
\end{assumption}

When using RL as the BR operator, uniqueness can be approximated with a small entropy term on policy being optimized. Therefore, in the situation where there is more than one best response, the BR operator will opt for the one with maximum entropy. This is a common additional loss that is often added to RL agents, and is also believed to aid exploration \citep{haarnoja2018soft}.

The second assumption, that the BR will be exact, is idealised. With sufficient model capacity, enough time, and appropriate hyper-parameter annealing schedules, RL can get close to an exact BR.

The final assumption, that the BR operator converges in finite time, is assumed because proofs follow easily from this assumption. It may be possible to relax this assumption to converge as $t \to \infty$, but we leave this for future work.

\begin{definition}[Grounded Interaction Graph]
An interaction graph's edges define how policies should best respond to each other. We call an interaction graph \emph{grounded} if its structure imposes convergence to a unique set of policies, where each policy is a unique, exact, finite BR over opponent policies defined by the interaction graph.
\end{definition}

\begin{theorem}[Grounded Lower Triangular] \label{the:grounded_lower_triangular}
All lower-triangular interaction graphs ($\Sigma_{i \geq j} = 0$) are grounded.
\end{theorem}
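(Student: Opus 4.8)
The plan is to exploit the acyclic dependency structure encoded by a lower-triangular $\Sigma$ and prove groundedness by induction on the policy index. Writing $\sigma_i$ for the $i$-th row of $\Sigma$, the lower-triangular constraint means each $\sigma_i$ is supported only on indices $j<i$, so that $\Pi_\theta(\cdot|s,\sigma_i)$ best-responds to a mixture over the strictly earlier policies $\{\pi_j\}_{j<i}$. I would phrase groundedness as a statement about fixed points: a set $\{\pi_i\}_{i=1}^N$ is a fixed point of the system exactly when each $\pi_i$ equals the exact best response to the mixture $\sum_{j<i}\sigma_{ij}\,\pi_j$, and the goal is to show this fixed point exists, is unique, and is reached in finite time.

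First I would establish the base case. Since $\Sigma$ is (strictly) lower-triangular, the first row $\sigma_1$ is the zero vector, i.e. the sink node, so $\pi_1$ best-responds to nothing and is pinned to a fixed initial policy, exactly as $\{\pi_0\}$ functions in \psro. Hence $\pi_1$ is uniquely determined with no optimization required. For the inductive step, I assume $\pi_1,\dots,\pi_{i-1}$ are already uniquely determined. Because $\sigma_i$ has support only on $\{1,\dots,i-1\}$, the opponent mixture $\sum_{j<i}\sigma_{ij}\,\pi_j$ is a \emph{fixed} distribution over already-frozen policies; invoking the Unique, Exact, Finite Best-Response assumption, the response $\pi_i=\mathrm{BR}\!\big(\sum_{j<i}\sigma_{ij}\,\pi_j\big)$ is then uniquely defined and obtained in finite time. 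By induction every $\pi_i$ is uniquely determined, and since each of the $N$ stages terminates in finite time, the whole set is produced in finite time (the sum of the $N$ stage times). This is precisely the assertion that the graph is grounded.

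The main obstacle I anticipate is reconciling this clean sequential induction with the fact that \neupl (Algorithm~\ref{alg:neupl}) trains all policies \emph{concurrently} rather than one at a time. The resolution I would give is that the acyclicity of a lower-triangular graph forces a convergence cascade: $\pi_1$ is fixed immediately, so after the finite time in which $\pi_2$'s now-stationary target is learned it freezes, which stabilizes the target of $\pi_3$, and so on up the triangle. Thus concurrent optimization reaches the same unique fixed point as the sequential procedure, and the only genuine content beyond this structural bookkeeping is the appeal to the idealized best-response assumption, which supplies uniqueness and exactness at each stage.
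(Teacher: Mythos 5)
Your proof is correct and follows essentially the same route as the paper's: an induction up the rows of the strictly lower-triangular $\Sigma$, with the sink policy fixed at the base and the Unique, Exact, Finite Best-Response assumption supplying uniqueness and finite-time convergence at each subsequent row. Your explicit treatment of the concurrent-training cascade is a point the paper's proof only gestures at (``since over time all previous policies will converge\dots''), but it is the same underlying argument, just spelled out more carefully.
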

\begin{proof}
The first row of a lower triangular matrix is all zeros meaning that the first policy does not respond to any other policies. The first policy is therefore static (and arbitrary) and does not change over time. The second row only responds to the first policy. Since the first policy is static and we are using a unique, exact and finite BR, the second policy will converge exactly to a unique policy. The $n$th row will respond to the $n-1$ previous policies. Since over time all previous policies will converge to uniquely, so will the $n$th policy.
\end{proof}

It is easy to imagine other grounded interaction graphs with interesting structure, however we will only focus on lower-triangular graphs for the purpose of this section. Research into other grounded interaction graphs may be interesting future work.

\begin{definition}[Static Interaction Graph]
We call an interaction graph \emph{static} if it does not change over  time: $\Sigma^t = \Sigma$.
\end{definition}

We now prove our first NeuPL result, that it can find an exact $N$-step best response under certain assumptions.

\begin{theorem}[NeuPL Static Lower-Triangular Exact $N$-Step Best Response] \label{the:neupl_static_exact_t_step_br}
NeuPL with a static, lower triangular interaction graph, an arbitrary fixed initial policy, $\pi_0$, and $N$ more policies, will converge to an exact $N$-step best response, assuming that the BR is unique, exact, and converges in finite time.
\end{theorem}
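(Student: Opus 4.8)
The plan is to reduce the claim to the already-proved grounded property (Theorem~\ref{the:grounded_lower_triangular}) and then check that the concurrent, parameter-sharing optimization of \neupl actually realizes the grounded convergence in finitely many sweeps. Recall that an $N$-step best response is a sequence $\pi_0,\pi_1,\dots,\pi_N$ in which $\pi_0$ is the fixed arbitrary initial policy and, for each $i\ge 1$, policy $\pi_i$ is the exact best response to a mixture $\sigma_i$ supported on $\{\pi_0,\dots,\pi_{i-1}\}$. Under a static lower-triangular $\Sigma$, row $i$ assigns weight only to strictly lower-indexed opponents, so identifying the $i$-th conditioned policy $\Pi_\theta(\cdot|s,\sigma_i)$ with the $i$-th element of the grounded fixed point, the unique set of policies guaranteed by groundedness is precisely such a sequence. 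It therefore suffices to show that the \neupl iterates reach this unique fixed point after finitely many sweeps.

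I would establish this by induction on the policy index $i$, reusing the cascade argument behind Theorem~\ref{the:grounded_lower_triangular}. Base case: the first row of $\Sigma$ is all zeros, so $\sigma_0=\mathbf{0}$ and $\Pi_\theta(\cdot|s,\sigma_0)$ receives no best-response objective (the sink is excluded from the $\textsc{ABR}$ sweep, or equivalently best-responds to an empty mixture); it thus equals $\pi_0$ at all times and is converged at time zero. Inductive step: assume policies $0,\dots,i-1$ have stabilized to $\pi_0,\dots,\pi_{i-1}$ by some finite time $T_{i-1}$. Because $\sigma_i$ is supported only on these now-frozen lower-indexed policies, the best-response \emph{target} of slot $i$ is constant for $t>T_{i-1}$; invoking the unique, exact, and finite BR assumption, repeated application of $\textsc{ABR}$ to slot $i$ drives $\Pi_\theta(\cdot|s,\sigma_i)$ to the unique best response $\pi_i$ in finitely many further iterations, which defines $T_i<\infty$. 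Since there are only $N{+}1$ rows, $T_N<\infty$, and at that time the population coincides with the $N$-step best-response sequence.

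The main obstacle is that, unlike \psro, \neupl trains every unique $\sigma_i$ concurrently through the \emph{shared} parameters $\theta$, so in principle an $\textsc{ABR}$ update aimed at slot $i$ could disturb an already-converged slot $j<i$ and break the induction. I would close this gap with two points. First, the exact best response is idempotent at its fixed point, so once slot $j$ has reached $\pi_j$ with its target frozen, every later sweep leaves slot $j$ invariant. Second, in the idealized regime of the best-response assumption together with sufficient model capacity, the conditional network can represent any tuple of per-slot policies, so the exact best response for slot $i$ is attainable while the other conditioned policies are held fixed; the per-slot objectives then decouple exactly as the lower-triangular structure dictates. Transient interference while lower slots are still moving is harmless, because the finite-time assumption lets slot $i$ wait out the finite convergence of slots $<i$ before being pinned down. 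This staggered convergence is the heart of the argument and the only place where the parameter sharing of \neupl needs care beyond the abstract grounded-graph reasoning of Theorem~\ref{the:grounded_lower_triangular}.
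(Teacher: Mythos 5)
Your proposal is correct and follows essentially the same route as the paper: reduce to the groundedness of lower-triangular interaction graphs (Theorem~\ref{the:grounded_lower_triangular}) and run the cascade induction from the static sink policy upward. The paper's own proof is a one-liner that simply recycles that argument; your additional discussion of why concurrent training through the shared parameters $\theta$ cannot disturb already-converged slots (idempotence of the exact BR at its fixed point, plus sufficient model capacity to represent the tuple of conditioned policies) makes explicit a point the paper leaves implicit, but it is an elaboration of the same argument rather than a different one.
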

\begin{proof}
The bulk of this proof can recycle the arguments in Theorem~\ref{the:grounded_lower_triangular}. Note that NeuPL trains its conditional policies in parallel according to an interaction graph. If that graph is lower-triangular, it is grounded, and NeuPL will find the conditioned policies corresponding to the $N$-step best response.
\end{proof}

\subsection{Grounded Meta-Graph Solver}
\label{subapp:grounded_meta_solvers}

It is possible to extend this result on any deterministic function acting on a sub-payoff, producing a lower triangular interaction graph.

\begin{definition}[Grounded Meta-Graph Solver]
A meta-graph solver is a function that takes the payoff matrix as an argument and outputs an interaction graph as an output.  We call a meta-graph solver, $\cF$, \emph{grounded} if, when using unique, exact, finite BRs, it converges to a grounded interaction graph in finite time.
\end{definition}

\begin{theorem}[Any Deterministic Lower-Triangular Meta-Graph Solver is Grounded] \label{the:grounded_lower_triangular_meta_solver}
Any deterministic function, $\cF$, that maps a \emph{sub-payoff}, $\cU_{<i, <i}$, to a row, $\Sigma_{i,<i}$, in a lower-triangular interaction graph is a grounded meta-graph solver.
\begin{equation}
    \Sigma_{i,<i} = \cF(\cU_{<i, <i})
\end{equation}
\end{theorem}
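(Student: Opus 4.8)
The plan is to mirror the inductive argument used in the proof of Theorem~\ref{the:grounded_lower_triangular}, but with an additional layer that accounts for the fact that each row of the interaction graph is now itself a deterministic function of the already-converged policies. The central observation is that the map $\cU_{<i,<i} \mapsto \Sigma_{i,<i}$ breaks the apparent circular dependency between policies and the interaction graph: although in general $\Sigma$ depends on $\cU$ and $\cU$ depends on all of the policies, here the $i$-th row $\Sigma_{i,<i}$ depends only on the sub-payoff $\cU_{<i,<i}$, which in turn depends only on the strictly earlier policies $\pi_1,\dots,\pi_{i-1}$. This triangular dependency is exactly what allows the induction to proceed without ever needing to reason about $\Sigma$ and the policies simultaneously.

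First I would establish the base case: the first row $\Sigma_{1,<1}$ has empty support, so $\pi_1$ best-responds to no one and is a fixed (arbitrary) sink policy, trivially converged at time zero. For the inductive step I would assume that $\pi_1,\dots,\pi_{i-1}$ have each converged to a unique policy in finite time. Since $\cU_{<i,<i}$ is determined entirely by these policies, with entries $\cU_{jk}=J(\pi_j,\pi_k)$ for $j,k<i$, it converges to a unique matrix in finite time. Because $\cF$ is deterministic, $\Sigma_{i,<i}=\cF(\cU_{<i,<i})$ then converges to a unique row, specifying a single fixed mixture opponent for $\pi_i$. Invoking the unique, exact, finite best-response assumption, $\pi_i$ converges to its unique best response against this fixed mixture in finite additional time. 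Closing the induction shows that all $N$ policies, and hence every row of $\Sigma$, converge to unique values in finite time; the limiting graph is lower-triangular and therefore grounded by Theorem~\ref{the:grounded_lower_triangular}, which is precisely the statement that $\cF$ is a grounded meta-graph solver.

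The main obstacle I anticipate is the finite-time claim under NeuPL's parallel training, where $\pi_i$ is optimized concurrently with the earlier policies rather than waiting for them to settle. While the earlier policies are still moving, the sub-payoff $\cU_{<i,<i}$, and thus the target $\Sigma_{i,<i}$, may fluctuate, and a discontinuous $\cF$ could make this target jump around. The resolution I would emphasize is that only the limiting behaviour matters: once $\pi_1,\dots,\pi_{i-1}$ have stabilized (which happens in finite time by the inductive hypothesis), the target mixture for $\pi_i$ is fixed, and the best-response assumption guarantees convergence to the unique best response in finite time regardless of the starting parameters. Crucially, no continuity of $\cF$ is required, since we never reason about the trajectory of $\Sigma_{i,<i}$ before $\cU_{<i,<i}$ settles, only about its final value. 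Summing the finitely many finite stabilization times over the $N$ policies then yields the overall finite-time convergence.
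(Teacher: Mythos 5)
Your proof is correct and follows essentially the same inductive argument as the paper: the lower-triangular, sub-payoff-only dependence breaks the circularity, so converged predecessor policies give a converged sub-payoff, determinism of $\cF$ gives a converged row, and the unique, exact, finite BR assumption closes the induction. Your explicit treatment of the parallel-training subtlety (that fluctuations of $\cU_{<i,<i}$ before the predecessors settle are irrelevant) is a welcome elaboration of what the paper only gestures at with ``as the policies change, so will the payoff.''
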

\begin{proof}
The difficulty here is that in general, as the policies change, so will the payoff, and hence so will interaction graph, etc. Similar to the arguments in Theorem~\ref{the:grounded_lower_triangular} the first policy is static and does not change. The second policy can only respond to the first policy (under the lower-triangular constraint), so the meta-graph solver has no flexibility to do otherwise, and therefore the second policy will also converge to a unique result. Note that as these policies converge, so will their sub-payoffs, $\cU_{<3, <3}$. Therefore, for row $n$ any deterministic mapping $\cF:\cU_{<i, <i} \to \Sigma_{i,<i}$ will result in a unique set of policies. 
\end{proof}

We can now make a further claim about NeuPL: that it will converge to a unique set of conditioned policies under certain grounded meta-graph solvers.

\begin{theorem}[NeuPL Deterministic Lower-Triangular Exact $N$-Step Best Response] \label{the:neupl_deterministic_exact_t_step_br}
Assuming any deterministic lower-triangular meta-graph solver, an arbitrary fixed initial policy, $\pi_0$, and $N$ more policies, NeuPL will converge to an exact $N$-step best response, assuming a unique, exact and finite BR.
\end{theorem}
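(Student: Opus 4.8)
The plan is to reduce this statement to the two results already in hand: the grounding of deterministic lower-triangular meta-graph solvers (Theorem~\ref{the:grounded_lower_triangular_meta_solver}) and the fact that NeuPL realizes an $N$-step best response whenever its interaction graph is grounded and lower-triangular (Theorem~\ref{the:neupl_static_exact_t_step_br}). The only genuinely new ingredient relative to the static case is that the graph is no longer fixed but recomputed from the current payoffs via $\Sigma \gets \cF(\cU)$, so I must argue that the feedback loop among the evolving conditioned policies, the evolving payoff matrix, and the evolving graph still settles, in finite time, onto a single grounded lower-triangular $\Sigma$. Once that is done, invoking Theorem~\ref{the:neupl_static_exact_t_step_br} on the limiting graph closes the argument.

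First I would set up the induction on the row index, mirroring the cascade argument of Theorem~\ref{the:grounded_lower_triangular}. The base case is immediate: under the lower-triangular constraint the first row of $\Sigma$ is identically zero, so $\Pi_\theta(\cdot\,|\,s,\sigma_0)$ best-responds to nothing and equals the fixed arbitrary initial policy $\pi_0$ for all time. For the inductive step I would assume policies $0,\dots,i-1$ have already converged to their unique limits in finite time; then the sub-payoff $\cU_{<i,<i}$ is fixed after that finite time, since its entries are the pairwise returns $J$ among converged policies. Because $\cF$ is deterministic and reads only this sub-payoff, the row it emits, $\Sigma_{i,<i}=\cF(\cU_{<i,<i})$, is constant from that point on. Consequently policy $i$ is, after a finite prefix, best-responding to a \emph{fixed} mixture over \emph{fixed} opponents, and by the unique/exact/finite BR assumption it converges to its unique limit in additional finite time.

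Next I would account for the concurrency of NeuPL and for the accumulation of convergence times. Although Algorithm~\ref{alg:neupl} trains all unique conditioned policies in parallel, the lower-triangular structure makes every dependency strictly backward-pointing, so the transient behavior of policy $i$ before its prerequisites have settled cannot corrupt its eventual limit; the induction only requires that each policy \emph{eventually} faces a fixed objective, not that lower-indexed policies converge before higher-indexed training begins. Summing the per-row convergence times over the finitely many rows $i=0,\dots,N$ then yields a finite total time after which the graph has stabilized to a grounded lower-triangular $\Sigma$ and all $N+1$ conditioned policies sit at their unique limits, which is precisely the exact $N$-step best response.

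The step I expect to be the main obstacle is the inductive claim that $\cU_{<i,<i}$ truly freezes once the lower-indexed policies freeze. This is where one must verify that the payoff read by $\cF$ (the estimator $\phi_\omega$ of Section~\ref{sec:neupl_by_rl}, or an exact \textsc{Eval}) reports the payoffs of the \emph{converged} policies rather than stale or noisy estimates, and that no higher-indexed policy feeds back into this sub-payoff. Under the exactness assumption the latter feedback is absent by the lower-triangular structure, so the real work is formalizing ``eventually-fixed objective implies convergence'' in the presence of an adaptive but monotonically stabilizing graph; once phrased as a finite nested induction, the remaining steps are routine applications of Theorems~\ref{the:grounded_lower_triangular_meta_solver} and~\ref{the:neupl_static_exact_t_step_br}.
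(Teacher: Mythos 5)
Your proposal is correct and follows essentially the same route as the paper, which proves this theorem by a one-line appeal to Theorems~\ref{the:neupl_static_exact_t_step_br} and~\ref{the:grounded_lower_triangular_meta_solver}; your row-by-row cascade induction (fixed sink policy, frozen sub-payoffs forcing a frozen deterministic row of $\Sigma$, then a unique finite-time BR for row $i$) is exactly the argument those theorems carry out. If anything, your write-up is more explicit than the paper's about the concurrency of training and the accumulation of finite convergence times, which the paper leaves implicit.
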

\begin{proof}
Similar in structure to Theorem~\ref{the:neupl_static_exact_t_step_br} and Theorem~\ref{the:grounded_lower_triangular_meta_solver}.
\end{proof}

Of course, the more general result, that NeuPL will also converge for any grounded meta-graph solver is also true.

\subsection{Nash Equilibrium Meta-Graph Solver}
\label{subapp:nash_meta_solvers}

Note that the maximum entropy Nash Equilibrium (MENE) is such as grounded meta-graph solver. This will result in an algorithm similar to \psronash \citep{lanctot2017unified} or Double Oracle \citep{mcmahan_planning_2003}. %

\begin{theorem}[NeuPL Nash Exact $N$-Step Best Response] \label{the:neupl_nash_exact_t_step_br}
Using an interaction-graph function that maps payoff to a lower triangular NE distribution:
\begin{equation}
    \mathcal{G}_{n,<n} = \text{NE}(U_{<n, <n})
\end{equation}
With a such a function, an arbitrary fixed initial policy, and sufficiently large $N$ policies, NeuPL will converge to an exact normal-form Nash Equilibrium (NE), assuming a unique, exact, finite BR.
\end{theorem}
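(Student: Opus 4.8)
The plan is to reduce the claim to two already-established ingredients --- the grounded meta-graph solver machinery of this section and the classical convergence guarantee of Double Oracle \citep{mcmahan_planning_2003} --- and to spend the real effort only on the ``sufficiently large $N$'' step that bridges them. First I would verify that the lower-triangular maximum-entropy Nash map $\cF:\cU_{<n,<n}\mapsto\Sigma_{n,<n}=\text{NE}(\cU_{<n,<n})$ satisfies the hypotheses of Theorem~\ref{the:grounded_lower_triangular_meta_solver}. It reads only the sub-payoff $\cU_{<n,<n}$, it writes a single row of a lower-triangular graph, and it is deterministic: in a symmetric zero-sum meta-game the equilibrium value is unique, the set of equilibria is a convex polytope, and the maximum-entropy rule selects its unique entropy-maximizing point. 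Hence $\cF$ is a grounded meta-graph solver, and Theorem~\ref{the:neupl_deterministic_exact_t_step_br} gives that, for every fixed $N$, \neupl converges to a unique $N$-step best response $\{\pi_0,\pi_1,\dots,\pi_N\}$ in which each $\pi_n$ is the exact BR to the max-entropy \nash mixture over $\Pi_{<n}$.

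Second I would identify this converged sequence with the iterates of Double Oracle / \psronash: by construction $\pi_n=\text{BR}\big(\text{NE}(\Pi_{<n})\big)$, which is exactly the Double Oracle update. I would then argue termination. Because each $\pi_n$ is an \emph{exact} best response, the value of the restricted sub-game is nondecreasing in $n$, and under the assumption that the underlying game admits only finitely many distinct realizable policies, a strictly improving best response can be appended only finitely often. Let $N^\ast$ be the first index at which $\text{BR}\big(\text{NE}(\Pi_{<N^\ast})\big)$ earns no more against the current \nash mixture than the \nash value itself.

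Third I would close with the standard Double Oracle certificate: at index $N^\ast$ no policy in the entire strategy space can exploit the mixture $\text{NE}(\Pi_{<N^\ast})$, so this mixture is an exact \nash equilibrium of the full normal-form game. Taking any $N\ge N^\ast$ then proves the theorem, and this is precisely the meaning of ``sufficiently large $N$''.

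The main obstacle is the termination/finiteness step. For a general Stochastic Game the space of policies is not finite, so one must restrict to a setting in which Double Oracle is known to terminate --- e.g. finitely many deterministic policies, as is implicit in the finite normal-form meta-game that \psro maintains --- or replace exact termination by an $\eps$-approximate no-improvement condition. A secondary subtlety is that the grounded result only certifies \emph{within-$N$} convergence of the policies and their sub-payoffs; to treat $\text{NE}(\cU_{<n,<n})$ as a well-defined best-response target one must ensure the max-entropy equilibrium is stable at the converged sub-payoffs, which the row-by-row induction underlying Theorem~\ref{the:grounded_lower_triangular} supplies but should be stated explicitly.
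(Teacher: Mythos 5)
Your proof takes exactly the same route as the paper's: the paper's entire argument for Theorem~\ref{the:neupl_nash_exact_t_step_br} is the one-line remark that one starts from Theorem~\ref{the:neupl_deterministic_exact_t_step_br} and then ``follows additional arguments from DO'' to reach a normal-form NE for sufficiently large $N$. Your write-up is a faithful and considerably more careful elaboration of that sketch --- in particular, your observation that the Double Oracle termination step requires a finite underlying strategy set (or an $\epsilon$-approximate no-improvement criterion) is a genuine subtlety that the paper leaves entirely implicit.
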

\begin{proof}
Starting from the proof in Theorem~\ref{the:neupl_deterministic_exact_t_step_br}, we follow additional arguments from DO \citep{mcmahan_planning_2003} to prove that for sufficient $N$ we will converge to a normal form NE.
\end{proof}

Of course, many other algorithms can be recovered using specific meta-graph solvers \citep{Muller2020A,marris2021jpsroicml}.

\section{Generality of \neupl and Discussion on Future Works}
\label{app:more_generality}

For simplicity of presentation, we focused on the simple setting of learning from scratch in symmetric zero-sum games. In this section, we discuss elements needed towards applying \neupl more generally from several aspects.

\subsection{Incorporating Prior Knowledge in \neupl}

\label{app:prior_knowledge}

As we alluded to in Section~\ref{sec:neupl}, the formulation of \neupl offers a principled way to incorporate prior knowledge in the form of pre-trained policies. In short, pre-trained policies can be incorporated in the same way as the {\it sink} policy, with the requirement that it can only gain in-edges in the interaction graph. Figure~\ref{fig:interaction_graph_with_prior_knowledge} offers an illustration of such an example, where the population includes 2 pre-trained policies ($\Pi_\theta(\cdot|o_{\le t}, \sigma_1)$ and $\Pi_\theta(\cdot|o_{\le t}, \sigma_2)$) while $\Pi_\theta(\cdot|o_{\le t}, \sigma_3)$ is optimized to best-respond to a mixture over its predecessors according to a suitable meta-graph solver. As an implementation details due to the use of neural networks, $\sigma_1$ and $\sigma_2$ need to be unique vectors and they need to be treated specifically in the corresponding MGS as they no longer represent valid probability distribution and should be excluded from policy training, similar to the treatment of the {\it sink} policy.

Finally, we note that our proposed conditional network architecture is in fact synergistic with the use of pre-trained policies, beyond naively including pre-trained expert policies as opponents in the population. This is because the shared action-value function $Q_\theta(o_{\le t}, \sigma_i, \sigma_j)$ would learn to predict expected returns between fixed pre-trained expert policies, kick-starting the learning of encoder and memory components in the underlying game. As proposed in Figure~\ref{fig:neupl_architecture}, this kick-started representation learning process should in principle transfer to the learning of other policies within the neural population as well, rendering it an attractive proposal.

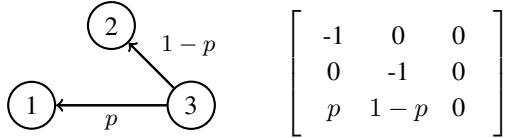
\begin{figure}
\centering
\begin{tikzpicture}[node distance={15mm}, thick, main/.style = {draw, circle}] 
\node[main] (1) {1}; 
\node[main] (2) [above right of=1] {2}; 
\node[main] (3) [below right of=2] {3}; 
\path[->]
    (3) edge node [above right] {\small $1-p$} (2)
    (3) edge node [below] {\small $p$} (1);
\draw[->] (3) to (1);
\draw[->] (3) to (2);
\end{tikzpicture} 
\qquad
\begin{tikzpicture} 
\matrix (m) [table] [right of=3] {
    -1 \& 0 \& 0 \\
    0 \& -1 \& 0 \\
    $p$ \& $1 - p$ \& 0 \\
};
\end{tikzpicture} 
\caption{Example \neupl experiment with an interaction graph incorporating pre-trained policies.}
\label{fig:interaction_graph_with_prior_knowledge}
\end{figure}

\subsection{Generalization to N-player General-sum Games}

\neupl can in principle extend to n-player general-sum games. We offer elements of the solution in this section while deferring thorough investigation in this direction to dedicated future work.

Consider an $n$-player general-sum game where the $i$-th player can play one of $M^i$ policies, the induced meta-game corresponds to a normal-form game between $n$ players, each selecting a policy to execute for an episode for one player. This yields $n$ empirical payoff tensors $\mathbf{U} = \{ \cU^i \}^n_{i=1}$ with $\cU^i \in \bR^{M^1 \times M^2 \dots \times M^n}$ the payoff tensor for player $i$, given all players' policy selections. 

\paragraph{Coarse Correlated Equilibrium (CCE):} in this setting, solving for NE becomes intractable but solution concepts such as CCE could be readily used instead \citep{marris2021jpsroicml}. This leads to a solver that takes on the form $\mathcal{P} \gets \cF(\mathbf{U})$ with $\mathcal{P} \in \bR^{M^1 \times M^2 \dots \times M^n}$ a joint-distribution over the cartesian product of policy choices across $n$ players. This solver can thus be used in place of the meta-strategy solver, similarly and repeatedly invoked by the meta-graph solver. Note that instead of obtaining a marginal distribution for a given player as in NE, CCE offers the joint distribution which can be marginalized for each player.

\paragraph{Heterogeneous Neural Populations:} in the n-player setting, different players may take on different roles in the game and admit entirely different observation, action spaces. This implies that heterogeneous neural populations are needed. Specifically, each player's policies can be represented by its own neural population of maximum capacity $M^i$ and its own $n$-dimensional marginal interaction {\it tensor}. The $i$-th neural population is defined by a conditional model $\Pi^i_\theta(\cdot | o_{\le t}, \sigma^i)$ and a marginalized interaction tensor $\Sigma^i \in \bR^{M^1 \times M^2 \dots \times M^n} = \{\sigma^i_k \}^{M^i}_{k=1}$, derived from incremental CCE joint distributions.

\end{document}